\newtheorem{theorem}{Theorem}
\newtheorem{lemma}[theorem]{Lemma}
\newtheorem{remark}{Comment}
\newtheorem{example}{Example}
\newtheorem{counter-example}[theorem]{Counter example}
\newtheorem{open question}[theorem]{Open question}
\newtheorem{corollary}[theorem]{Corollary}
\newcommand{\In}{{\rm in}}
\newcommand{\out}{{\rm out}}
\newcommand{\rfss}{{\tt RFSS}}
\newcommand{\depth}{{\rm depth}}
\newcommand{\eqdef}{\stackrel{\footnotesize\rm def}{=}}
\newcommand{\cd}{{\cal D}}
\newcommand{\cc}{{\cal C}}
\newcommand{\ch}{{\cal H}}
\newcommand{\cx}{{\cal X}}
\newcommand{\cy}{{\cal Y}}
\newcommand{\cs}{{\cal S}}
\newcommand{\x}{{\mathbf x}}
\newcommand{\ba}{{\mathbf a}}
\newcommand{\z}{{\mathbf z}}
\newcommand{\w}{{\mathbf w}}
\newcommand{\f}{{\mathbf f}}
\newcommand{\g}{{\mathbf g}}
\newcommand{\e}{{\mathbf e}}
\newcommand{\bv}{{\mathbf v}}
\newcommand{\vomega}{{\boldsymbol{\omega}}}
\newcommand{\valpha}{{\boldsymbol{\alpha}}}
\newcommand{\vbeta}{{\boldsymbol{\beta}}}
\renewcommand{\Re}{\mathrm{Re}}
\newcommand{\reals}{{\mathbb R}}
\newcommand{\torus}{{\mathbb T}}
\newcommand{\sphere}{{\mathbb S}}
\newcommand{\complex}{{\mathbb C}}
\newcommand{\inner}[1]{\langle #1 \rangle}
\newcommand{\todonow}[1]{{\bf #1}}
\DeclareMathOperator*{\E}{\mathbb{E}}
\def\nrme{{norm-efficient}}
\title{Random Features for Compositional Kernels}
\author{
Amit Daniely\thanks{Google Brain, \tt{\{amitdaniely, frostig, vineet, singer\}@google.com}} \and
Roy Frostig \and
Vineet Gupta \and
Yoram Singer}
\begin{document}

\setcounter{page}{0}
\maketitle

\thispagestyle{empty}
\begin{abstract}
 We describe and analyze a simple random feature scheme (RFS) from prescribed
 compositional kernels. The compositional kernels we use are inspired by the
 structure of convolutional neural networks and kernels. The resulting scheme
 yields sparse and efficiently computable features. Each random
 feature can be represented as an algebraic expression over a {\em small}
 number of (random) paths in a composition tree. Thus, compositional random
 features can be stored compactly. The discrete nature of the generation
 process enables de-duplication of repeated features, further compacting the
 representation and increasing the diversity of the embeddings. Our approach
 complements and can be combined with previous random feature schemes.
\end{abstract}

\newpage

\section{Introduction} \label{intro:sec}
Before the resurgence of deep architectures, kernel methods~\cite{Vapnik95,
Vapnik98, ScholkopfBuSm98} achieved state of the art results in various
supervised learning tasks. Learning using kernel representations amounts to
convex optimization with provable convergence guarantees. The first generation
of kernel functions in machine learning were oblivious to spatial or temporal
characteristics of input spaces such as text, speech, and images. A natural
way to capture local spatial or temporal structure is through hierarchical
structures using compositions of kernels, see for
instance~\cite{scholkopf1998prior, grauman2005pyramid}. Compositional kernels
became more prominent among kernel methods following the success of deep
networks and, for several tasks, they currently achieve the state of the art
among all kernel methods~\cite{cho2009kernel, bo2011object,
mairal2014convolutional, Mairal16}.

While the ``kernel trick'' unleashes the power of convex optimization, it
comes with a large computational cost as it requires storing or repeatedly
computing kernel products between pairs of examples.  \citet{RahimiRe07}
described and analyzed an elegant and computationally effective way that
mitigates this problem by generating random features that approximate certain
kernels. Their work was extended to various other kernels~\cite{kar2012random,
pennington2015spherical, bach2015equivalence, bach2014breaking}.

In this paper we describe and analyze a simple random feature generation
scheme from prescribed compositional kernels. The compositional kernels we use
are inspired by the structure of neural networks. The kernels' definition and the connection to networks
was developed in~\cite{DanielyFrSi16, daniely2017}. Our feature map construction has several
benefits over previous ones. It naturally exploits hierarchical
structure in terms of representation power. The random feature generation is
computationally efficient. More importantly, computing the feature map
 is efficient and often can be performed in time
linear in the embedding dimension. Last but not least, computing the feature
map requires highly sparse access patterns to the input, implying low memory
requirements in the process.

The course of the paper is as follows. After a brief background, we start the
paper by recapping in Sec.~\ref{rfs:sec} the notion of random features schemes
(RFS). Informally speaking, a random feature scheme is an embedding from an
input space into the real or complex numbers. The scheme is random such that
multiple instantiations result in different mappings. Standard inner products
in the embedded space emulate a kernel function and converge to the inner
product that the kernel defines.  We conclude the section with a
derivation of concentration bounds for kernel approximation by RFS and a
generalization bound for learning with RFS.

The subsequent sections provide the algorithmic core of the paper. In
Sec.~\ref{rfs4basic:sec} we describe RFS for basic spaces such as
$\{-1,+1\}$, $[n]$, and $\torus$. We show that the standard inner product on
the sphere in one and two dimensions admits an effective \nrme\ RFS.  However,
any RFS for $\sphere^{d-1}$ where $d \geq 3$ is norm-deficient.  In
Sec.~\ref{comp_rfs:sec}, we discuss how to build random
feature schemes from compositional kernels that are described by a {\em computation skeleton},
which is an annotated directed acyclic graph.  The base spaces constitute the
initial nodes of the skeleton.  As the name implies, a compositional kernel
consists of a succession of compositions of prior constructed kernels, each of
which is by itself a compositional kernel or a base kernel. We conclude the
section with run-time and sparsity-level analysis.

The end result of our construction is a lightweight yet flexible feature generation
procedure. Each random feature can be represented as an algebraic expression
over of a {\em small} number of (random) paths in a composition tree. Thus,
compositional random features can be stored very compactly. The discrete
nature of the generation process enables de-duplication of repeated features,
further compacting the representation and increasing the diversity of the
embeddings. The latter property cannot be directly achieved by previously
studied random feature schemes. We would like to emphasize that our approach
does not stand in contrast to, but rather complements, prior work.  Indeed, the
base kernels of a compositional kernel can be non-elementary such as the
Gaussian kernel, and hence our RFS can be used in conjunction with the well-studied RFS of~\cite{RahimiRe07} for Gaussian kernels. One can also
envision a hybrid structure where the base kernels are defined, for example,
through a PSD matrix obtained by metric learning on the original input space.

\section{Background and notation}
\label{background:sec}
We start with a few notational conventions used throughout the paper.
The Hilbert spaces we consider are over the reals. This includes spaces that
are usually treated as complex Hilbert spaces. For example, for
$z=a+ib,z'=a'+ib'\in \complex$ we denote $\inner{z,z'} = \Re(z\bar z') =
aa'+bb'$ (rather than the more standard $\inner{z,z'} = z\bar z'$). Likewise,
for $\z,\z'\in \complex^q$ we denote $\inner{\z,\z'}=\sum_{i=1}^q
\inner{z_i,z'_i}$. For a measure space $(\Omega,\mu)$, $L^2(\Omega)$ denotes
the space of square integrable functions $f:\Omega\to \complex$. For $f,g\in L^2(\Omega)$ we denote
$\inner{f,g}_{L^2(\Omega)} = \int_{\Omega}\inner{f(x),g(x)}d\mu(x)$.
For all the measurable spaces we
consider, we assume that singletons are measurable. We denote
$\torus = \{z\in\complex : |z|=1\}$.


Next, we introduce the notation for kernel spaces and recap some of their
properties. A {\em kernel} is a function
$k:\cx\times\cx\to\reals$ such that for every $\x_1,\ldots,\x_m\in\cx$ the
matrix $\{k(\x_i,\x_j)\}_{i,j}$ is positive semi-definite.
We say that $k$ is
{\em $D$-bounded} if $k(\x,\x)\le D^2$ for every $\x\in\cx$. We call $k$ a
{\em normalized} kernel if $k(\x,\x) = 1$ for every $\x\in\cx$. We will always assume that kernels are normalized.
A {\em kernel space} is a Hilbert space $\ch$ of functions from
$\cx$ to $\reals$ such that for every $\x\in\cx$ the linear functional
$f\in\ch\mapsto f(\x)$ is bounded.  The following theorem describes a
one-to-one correspondence between kernels and kernel spaces.

\begin{theorem}\label{thm:ker_spaces}
For every kernel $k$ there exists a unique kernel space $\ch_k$ such that for
every $\x,\x'\in\cx$, $k(\x,\x') = \inner{k(\cdot,\x),k(\cdot,\x')}_{\ch_k}$.
Likewise, for every kernel space $\ch$ there is a kernel $k$ for which
$\ch=\ch_k$.
\end{theorem}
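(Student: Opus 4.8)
The plan is to run the classical Moore--Aronszajn argument. First I would construct $\ch_k$ explicitly as the completion of the span of the functions $k(\cdot,\x)$; then check that any kernel space having $k$ as its reproducing kernel coincides with it; and finally carry the correspondence backwards. For the construction, put $\ch_0 = \mathrm{span}\{k(\cdot,\x):\x\in\cx\}\subseteq\reals^\cx$ and equip it with the bilinear form $\langle\sum_i a_i k(\cdot,\x_i),\sum_j b_j k(\cdot,\x'_j)\rangle\eqdef\sum_{i,j}a_ib_j\,k(\x_i,\x'_j)$. One first checks this is well defined, i.e.\ independent of the chosen representations, by noting that the right-hand side equals $\sum_i a_i\,g(\x_i)$ with $g=\sum_j b_j k(\cdot,\x'_j)$, and hence depends only on the functions involved; positive semi-definiteness of the form is exactly the assumption that the Gram matrices $\{k(\x_i,\x_j)\}_{i,j}$ are PSD, and the reproducing identity $f(\x)=\langle f,k(\cdot,\x)\rangle$ on $\ch_0$ together with Cauchy--Schwarz gives $|f(\x)|^2\le\langle f,f\rangle\,k(\x,\x)$, so the form is in fact an inner product. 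Let $\overline{\ch_0}$ be the abstract completion; for $f\in\overline{\ch_0}$ define $\tilde f(\x)=\langle f,k(\cdot,\x)\rangle$, which extends the original functions (if $f_n\to f$ in $\overline{\ch_0}$ with $f_n\in\ch_0$ then $f_n(\x)\to\tilde f(\x)$) and is injective in $f$, since $\tilde f\equiv 0$ forces $f\perp\ch_0$, hence $f=0$ by density. Taking $\ch_k$ to be the image of $f\mapsto\tilde f$ with the transported inner product, point evaluation on $\ch_k$ is bounded (by $\sqrt{k(\x,\x)}$), $k(\cdot,\x)\in\ch_k$ and is the representer of evaluation at $\x$, and passing to the limit preserves $f(\x)=\langle f,k(\cdot,\x)\rangle_{\ch_k}$; in particular $k(\x,\x')=\langle k(\cdot,\x),k(\cdot,\x')\rangle_{\ch_k}$, so $\ch_k$ has the asserted property.

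For uniqueness, let $\ch$ be any kernel space in which $k$ is the reproducing kernel, i.e.\ $k(\cdot,\x)$ is the Riesz representer of $f\mapsto f(\x)$ (so that $k(\x,\x')=\langle k(\cdot,\x),k(\cdot,\x')\rangle_{\ch}$, as in the statement). Then $\ch\supseteq\ch_0$, the inner product on $\ch_0$ is forced to be the one above, and $\ch_0$ is dense in $\ch$, since any $f\perp\ch_0$ satisfies $f(\x)=\langle f,k(\cdot,\x)\rangle=0$ for all $\x$, hence $f=0$. Therefore $\ch$ is the completion of $\ch_0$ realized through point evaluation, which is $\ch_k$. For the converse direction, given a kernel space $\ch$, Riesz representation supplies for each $\x$ a unique $g_\x\in\ch$ with $f(\x)=\langle f,g_\x\rangle_{\ch}$; set $k(\x,\x')=\langle g_\x,g_{\x'}\rangle_{\ch}$. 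This $k$ is symmetric and PSD, because $\sum_{i,j}c_ic_j\,k(\x_i,\x_j)=\bigl\|\sum_i c_i g_{\x_i}\bigr\|_{\ch}^2\ge 0$; and since $g_\x(\x')=\langle g_\x,g_{\x'}\rangle_{\ch}=k(\x',\x)$ we get $g_\x=k(\cdot,\x)$, so $k$ is the reproducing kernel of $\ch$, and the uniqueness just established gives $\ch=\ch_k$.

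I expect the only step requiring genuine care to be the passage from the abstract metric completion $\overline{\ch_0}$ to an honest Hilbert space of functions on $\cx$: one must verify that inequivalent Cauchy sequences in $\ch_0$ produce distinct functions, i.e.\ the injectivity of $f\mapsto\tilde f$, which is precisely what makes ``$\ch_k$'' well defined as a \emph{kernel space} and not merely as an abstract Hilbert space. Everything else --- well-definedness and positivity of the form on $\ch_0$, the density argument behind uniqueness, and the Riesz-representation bookkeeping for the converse --- is routine once the reproducing identity $f(\x)=\langle f,k(\cdot,\x)\rangle$ on $\ch_0$ is in hand.
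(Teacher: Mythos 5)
Your proof is correct: this is the classical Moore--Aronszajn theorem, which the paper states as background without proof, and your argument is the standard one --- complete $\mathrm{span}\{k(\cdot,\x)\}$ under the induced inner product, realize the completion as a function space via $\tilde f(\x)=\inner{f,k(\cdot,\x)}$, and check injectivity of that realization, which you rightly identify as the only delicate step. One remark on the uniqueness half: you prove uniqueness among kernel spaces in which $k(\cdot,\x)$ is the Riesz representer of evaluation at $\x$, and this strengthening of the stated hypothesis is genuinely needed rather than a mere restatement, since the literal condition $k(\x,\x')=\inner{k(\cdot,\x),k(\cdot,\x')}_{\ch}$ by itself does not single out $\ch$ (any kernel space obtained by adjoining to $\ch_k$ an orthogonal complement of further functions with bounded evaluations would still satisfy it); your reading is the standard and intended one.
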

\noindent
The following theorem underscores
a tight connection between kernels and embeddings of $\cx$ into Hilbert spaces.
\begin{theorem}\label{thm:rkhs_embed}
A function $k:\cx \times \cx \to \reals$ is a kernel if and only if there
exists a mapping $\phi: \cx \to\ch$ to some Hilbert space for which
$k(\x,\x')=\inner{\phi(\x),\phi(\x')}_{\ch}$. In this case,
$ \ch_k = \{f_\mathbf{v} \mid \mathbf{v}\in\ch \} $
where $f_\mathbf{v}(\x) = \inner{\mathbf{v},\phi(\x)}_\ch$. Furthermore,
$\|f\|_{\ch_k} = \min\{\|\mathbf{v}\|_\ch\mid f=f_\mathbf{v}\}$ and
the minimizer is unique.
\end{theorem}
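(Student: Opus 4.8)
The plan is to dispatch the biconditional quickly and then pin down $\ch_k$ through an orthogonal decomposition of $\ch$. For the ``if'' direction, assume $k(\x,\x')=\inner{\phi(\x),\phi(\x')}_\ch$; then for any $\x_1,\dots,\x_m\in\cx$ and reals $c_1,\dots,c_m$ one has $\sum_{i,j}c_ic_jk(\x_i,\x_j)=\bigl\|\sum_i c_i\phi(\x_i)\bigr\|_\ch^2\ge 0$, so every Gram matrix of $k$ is positive semi-definite and $k$ is a kernel. For the ``only if'' direction, apply Theorem~\ref{thm:ker_spaces}: take $\ch=\ch_k$ and $\phi(\x)=k(\cdot,\x)$, for which $\inner{\phi(\x),\phi(\x')}_{\ch_k}=k(\x,\x')$ by construction.

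For the remaining assertions, fix an arbitrary Hilbert space $\ch$ and map $\phi:\cx\to\ch$ realizing $k$. Let $V\subseteq\ch$ be the closed linear span of $\{\phi(\x):\x\in\cx\}$ and write $\ch=V\oplus V^\perp$, so each $\bv\in\ch$ decomposes as $\bv=\bv_0+\bv_1$ with $\bv_0\in V$. Since $\phi(\x)\in V$, we get $f_\bv(\x)=\inner{\bv,\phi(\x)}_\ch=\inner{\bv_0,\phi(\x)}_\ch=f_{\bv_0}(\x)$, so $f_\bv$ depends only on $\bv_0$ and hence $\{f_\bv:\bv\in\ch\}=\{f_{\bv_0}:\bv_0\in V\}=:\cw$. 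The map $\bv_0\mapsto f_{\bv_0}$ on $V$ is injective: if $f_{\bv_0}\equiv 0$ then $\bv_0\perp\phi(\x)$ for all $\x$, so $\bv_0\perp V$ and $\bv_0=0$. Thus every $f\in\cw$ has a unique preimage in $V$, and since $\|\bv\|_\ch^2=\|\bv_0\|_\ch^2+\|\bv_1\|_\ch^2$, this preimage is the unique minimizer of $\|\bv\|_\ch$ subject to $f_\bv=f$. This settles the ``min'' and uniqueness claims once $\cw$, equipped with the norm of its preimage in $V$, is identified with $\ch_k$.

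To carry out that identification, give $\cw$ the inner product transported from $V$ via the (now bijective) map $\bv_0\mapsto f_{\bv_0}$, making $\cw$ a real Hilbert space of functions $\cx\to\reals$ isometric to $V$; its norm is exactly $\|f\|_\cw=\min\{\|\bv\|_\ch:f_\bv=f\}$. Evaluations are bounded, $|f(\x)|\le\|f\|_\cw\,\sqrt{k(\x,\x)}$, so $\cw$ is a kernel space. Next, $f_{\phi(\x)}=k(\cdot,\x)$ because $f_{\phi(\x)}(\x')=\inner{\phi(\x),\phi(\x')}_\ch=k(\x,\x')$, and for $\bv_0\in V$ we have $\inner{f_{\bv_0},f_{\phi(\x)}}_\cw=\inner{\bv_0,\phi(\x)}_\ch=f_{\bv_0}(\x)$, so $k(\cdot,\x)$ represents evaluation at $\x$ in $\cw$ and in particular $\inner{k(\cdot,\x),k(\cdot,\x')}_\cw=k(\x,\x')$. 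By the uniqueness in Theorem~\ref{thm:ker_spaces}, $\cw=\ch_k$ with identical inner product, giving $\ch_k=\{f_\bv:\bv\in\ch\}$ together with the stated formula for $\|f\|_{\ch_k}$ and its unique minimizer.

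I expect the last step --- deducing $\cw=\ch_k$ --- to be the only subtle point, and if the uniqueness in Theorem~\ref{thm:ker_spaces} is inconvenient to invoke in exactly this form, I would argue instead by density: $\operatorname{span}\{k(\cdot,\x):\x\in\cx\}$ is dense in $\cw$ (it corresponds under the isometry to the dense subspace $\operatorname{span}\{\phi(\x):\x\in\cx\}$ of $V$) and dense in $\ch_k$ (a function orthogonal to all $k(\cdot,\x)$ vanishes identically), while the two inner products agree on this common dense subspace since both send the pair $(k(\cdot,\x),k(\cdot,\x'))$ to $k(\x,\x')$; hence the two completions coincide as normed spaces of functions.
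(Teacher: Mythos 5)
The paper states this theorem as standard background and gives no proof of its own, so there is nothing to compare against; your argument stands on its own and it is correct. It is the classical one: positive semi-definiteness of Gram matrices from the feature map for the ``if'' direction, $\phi(\x)=k(\cdot,\x)$ via Theorem~\ref{thm:ker_spaces} for the ``only if'' direction, and then the orthogonal decomposition $\ch=V\oplus V^{\perp}$ with $V=\overline{\operatorname{span}}\{\phi(\x)\}$ to show that $f_{\bv}$ depends only on the projection $\bv_0$, that $\bv_0\mapsto f_{\bv_0}$ is an isometric bijection onto the transported space $\cw$, and that $\bv_0$ is the unique norm minimizer. The only step requiring care is the final identification $\cw=\ch_k$: you correctly establish the reproducing property $\inner{f,k(\cdot,\x)}_{\cw}=f(\x)$, which is what the uniqueness in Theorem~\ref{thm:ker_spaces} implicitly refers to, and your fallback density argument also goes through because the evaluation bound $|f(\x)|\le\|f\|\sqrt{k(\x,\x)}$ (valid in both spaces) makes norm convergence imply pointwise convergence, so the two completions agree as sets of functions and not merely abstractly.
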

\noindent
We say that $f:[-1,1]\to \reals$ is a normalized
{\em positive semi-definite} (PSD) function if
$$f(\rho) = \sum_{i=0}^\infty a_i\rho^i \, \mbox{ where } \,
\sum_{i=0}^\infty a_i =1,\, \forall i: a_i\geq 0\,.$$
Note that $f$ is PSD if and only if $f\circ k$ is a normalized kernel for
any normalized kernel $k$.

\section{Random feature schemes}
\label{rfs:sec}
Let $\cx$ be a measurable space and let $k:\cx\times\cx\to \reals$ be a
normalized kernel.  A {\em random features scheme} (RFS) for $k$ is a pair
$(\psi,\mu)$ where $\mu$ is a probability measure on a measurable space
$\Omega$, and $\psi:\Omega\times\cx\to \complex$ is a measurable function,
such that
\[
\forall \x,\x'\in\cx,\;\;\;\;k(\x,\x') =
  \E_{\omega\sim \mu}\left[\psi(\omega,\x)\overline{\psi(\omega,\x')}\right]\,.
\]
Since the kernel is real valued, we have in this case that,
\begin{eqnarray}
k(\x,\x') &=& \Re\left(k(\x,\x')\right)
\nonumber \\
  &=& \E_{\omega\sim\mu}
    \left[\Re\left(\psi(\omega,\x)\overline{\psi(\omega,\x')}\right)\right]
\nonumber \\
  &=& \E_{\omega\sim \mu}\inner{\psi(\omega,\x),\psi(\omega,\x')} \,.
\label{eq:ker_eq_inner}
\end{eqnarray}
We often refer to $\psi$ as a random feature scheme.
We define the {\em norm} of $\psi$ as $\|\psi\| = \sup_{\omega,\x}|\psi(\omega,\x)|$. We say that $\psi$ is {\em $C$-bounded} if $\|\psi\|\le C$. As the kernels are normalized,
\eqref{eq:ker_eq_inner} implies that $\|\psi\|\ge 1$ always. In light of this, we say that an RFS $\psi$  is
{\em \nrme} if it is $1$-bounded.
Note that in this case, since the kernel is normalized, it holds that
$|\psi(\omega,\x) | =1$ for almost every $\omega$ as otherwise we would
obtain that $k(\x,\x)<1$. Hence, we can assume w.l.o.g.\ that the range of
\nrme\ RFSs is $\torus$.

\begin{remark}[From complex to real RFSs]\normalfont
While complex-valued features would simplify the analysis of random
feature schemes, it often favorable to work in practice with real-valued
features. Let $\psi(\omega,\x):=R_\omega(\x)e^{i\theta_\omega(\x)}$
be a $C$-bounded RFS for $k$. Consider the RFS
$$\psi'((\omega,b),\x):=
  \sqrt{2}R_\omega(\x)\cos\left(\theta_\omega(\x) + b\right) \,,$$
where $\omega\sim\mu$, and $b\in \left\{0,\frac{\pi}{2}\right\}$ is
distributed uniformly and independently from $\omega$. It is not difficult
to verify that $\psi'$ is $\sqrt{2}C$-bounded RFS for $k$.
\end{remark}\noindent
A random {\em feature} generated from $\psi$ is a random function
$\psi(\omega,\cdot)$ from $\cx$ to $\complex$ where $\omega\sim\mu$.
A random {\em $q$-embedding} generated from $\psi$ is the random mapping
\[
\Psi_\vomega(\x) \eqdef
\frac{\left(\psi({\omega_1},\x),\ldots , \psi({\omega_q},\x) \right)}
     {\sqrt{q}} \,,
\]
where $\omega_1,\ldots,\omega_q\sim \mu$ are i.i.d. The random
{\em $q$-kernel} corresponding to $\Psi_\vomega$ is $k_\vomega(\x,\x') =
\inner{\Psi_\vomega(\x),\Psi_\vomega(\x')}$. Likewise, the random
{\em $q$-kernel space} corresponding to $\Psi_\vomega$ is
$\ch_{k_\vomega}$. For the rest of this section, let us fix a $C$-bounded RFS $\psi$ for a
normalized kernel $k$ and a random $q$ embedding $\Psi_\vomega$.
For every $\x,\x'\in \cx$
\[
k_\vomega(\x,\x') =
  \frac{1}{q}\sum_{i=1}^q \inner{\psi({\omega_i},\x),\psi({\omega_i},\x')}
\]
is an average of $q$ independent random variables whose expectation is
$k(\x,\x')$. By Hoeffding's bound we have the following theorem.
\begin{theorem}[Kernel Approximation]\label{thm:ker_apx}
Assume that $q\ge \frac{2C^4\log\left(\frac{2}{\delta}\right)}{\epsilon^2}$,
then for every $\x, \x'\in \cx$ we have
$\Pr\left(\left| k_\vomega(\x,\x') - k(\x,\x') \right| \ge \epsilon \right)
  \le \delta$.
\end{theorem}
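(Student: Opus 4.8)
The plan is to apply Hoeffding's inequality directly to the sum defining $k_\vomega(\x,\x')$. Fix $\x,\x'\in\cx$ and set $Z_i \eqdef \inner{\psi(\omega_i,\x),\psi(\omega_i,\x')}$ for $i=1,\ldots,q$. Since the $\omega_i$ are i.i.d.\ from $\mu$, the $Z_i$ are i.i.d.\ real random variables, and by the defining property \eqref{eq:ker_eq_inner} of an RFS we have $\E[Z_i] = k(\x,\x')$. Thus $k_\vomega(\x,\x') = \frac1q\sum_{i=1}^q Z_i$ is precisely the empirical mean of these variables, and the quantity we must control is $|\frac1q\sum_i Z_i - \E Z_1|$.

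The first real step is to get a bounded-range estimate for each $Z_i$. Writing the inner product out as $Z_i = \Re\bigl(\psi(\omega_i,\x)\overline{\psi(\omega_i,\x')}\bigr)$, we have $|Z_i| \le |\psi(\omega_i,\x)|\,|\psi(\omega_i,\x')| \le C^2$, using that $\psi$ is $C$-bounded. Hence each $Z_i$ takes values in the interval $[-C^2, C^2]$, of length $2C^2$.

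Next I would invoke Hoeffding's inequality for the average of $q$ independent variables each supported in an interval of length $2C^2$: this gives $\Pr\bigl(|k_\vomega(\x,\x') - k(\x,\x')| \ge \epsilon\bigr) \le 2\exp\!\bigl(-\frac{2 q^2 \epsilon^2}{q\,(2C^2)^2}\bigr) = 2\exp\!\bigl(-\frac{q\epsilon^2}{2C^4}\bigr)$. Finally, substituting the hypothesis $q \ge \frac{2C^4\log(2/\delta)}{\epsilon^2}$ makes the exponent at most $-\log(2/\delta)$, so the right-hand side is at most $2\cdot(\delta/2) = \delta$, as claimed.

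There is no genuine obstacle here; the proof is a one-line application of a standard concentration bound. The only point requiring a modicum of care is the range bound $|Z_i|\le C^2$ — in particular tracking the factor-of-two in the interval length $2C^2$ so that the Hoeffding exponent comes out as $q\epsilon^2/(2C^4)$ and matches the stated threshold on $q$ exactly.
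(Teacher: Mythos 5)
Your proposal is correct and follows exactly the route the paper intends: the paper's entire proof is the observation that $k_\vomega(\x,\x')$ is an average of $q$ i.i.d.\ variables with mean $k(\x,\x')$ followed by an appeal to Hoeffding, and you have simply filled in the details, with the range bound $|Z_i|\le C^2$ and the resulting exponent $-q\epsilon^2/(2C^4)$ matching the stated threshold on $q$ precisely.
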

\noindent We next discuss approximation of functions in $\ch_k$ by functions in
$\ch_{k_\vomega}$. It would be useful to consider the embedding
\begin{equation}\label{eqn:psi-embedding}
\x\mapsto\Psi^\x \; \mbox{ where } \;
  \Psi^\x\eqdef\psi(\cdot,\x)\in L^2(\Omega) \,.
\end{equation}
From~\eqref{eq:ker_eq_inner} it holds
that for any $\x,\x'\in\cx$,
$k(\x,\x') = \inner{\Psi^\x,\Psi^{\x'}}_{L^2(\Omega)}$.
In particular, from Theorem~\ref{thm:rkhs_embed}, for every $f\in\ch_k$ there
is a unique function $\check{f}\in L^2(\Omega)$ such that
$\|\check{f}\|_{L^2(\Omega)} = \|f\|_{\ch_k}$ and for every $\x\in\cx$,
\begin{equation}\label{eq:f_x_as_inner}
f(\x) = \inner{\check{f},\Psi^\x}_{L^2(\Omega)} =
  \E_{\omega\sim\mu}\inner{\check{f}(\omega),\psi(\omega,\x)}\,.
\end{equation}
Let us denote
$f_\vomega(\x) = \frac{1}{q}\sum_{i=1}^q
  \inner{\check{f}(\omega_i),\psi(\omega_i,\x)}$.
From~\eqref{eq:f_x_as_inner} we have that
$\E_\vomega\left[f_\vomega(\x)\right] = f(\x)$.
Furthermore, for every $\x$,
the variance of $f_\vomega(\x)$ is at most
\begin{eqnarray*}
\frac{1}{q}\E_{\omega\sim\mu}
  \left|\inner{\check{f}(\omega),\psi(\omega,\x)}\right|^2
&\le &
\frac{C^2}{q}\E_{\omega\sim\mu}
  \left|\check{f}(\omega)\right|^2 
\\
&=& \frac{C^2\|f\|^2_{\ch_k}}{q}\,.
\end{eqnarray*}
An immediate consequence is the following corollary.
\begin{corollary} [Function Approximation] \label{thm:func_apx}
For all $\x\in\cx$,
$\E_\vomega|f(\x) - f_\vomega(\x)|^2 \le \frac{C^2\|f\|^2_{\ch_k}}{q}$.
\end{corollary}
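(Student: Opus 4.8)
The plan is to realize $f_\vomega(\x)$ as an empirical average of i.i.d.\ random variables whose common mean is $f(\x)$, and then to bound the variance of a single term; dividing by $q$ gives the claimed rate.

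\textbf{Setting up the dual function.} First I would apply Theorem~\ref{thm:rkhs_embed} to the embedding $\x\mapsto\Psi^\x\in L^2(\Omega)$ of~\eqref{eqn:psi-embedding}, which realizes $k$ by~\eqref{eq:ker_eq_inner}. This yields, for the given $f\in\ch_k$, a function $\check f\in L^2(\Omega)$ with $\|\check f\|_{L^2(\Omega)}=\|f\|_{\ch_k}$ and the integral representation recorded in~\eqref{eq:f_x_as_inner}, namely $f(\x)=\E_{\omega\sim\mu}\inner{\check f(\omega),\psi(\omega,\x)}$ for every $\x\in\cx$.

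\textbf{Second-moment bound and conclusion.} Fix $\x\in\cx$ and put $Z_i\eqdef\inner{\check f(\omega_i),\psi(\omega_i,\x)}$ for the i.i.d.\ draws $\omega_1,\dots,\omega_q\sim\mu$ that define $\Psi_\vomega$. Under the paper's convention $\inner{z,z'}=\Re(z\bar z')$ each $Z_i$ is real-valued, $f_\vomega(\x)=\frac1q\sum_{i=1}^q Z_i$, and $\E Z_i=f(\x)$ by~\eqref{eq:f_x_as_inner}. The key estimate is on the second moment of one term: using $|\Re(z\bar z')|\le|z|\,|z'|$ for $z,z'\in\complex$ together with $C$-boundedness of $\psi$,
\[
\E_{\omega\sim\mu}|Z_i|^2=\E_{\omega\sim\mu}\bigl|\inner{\check f(\omega),\psi(\omega,\x)}\bigr|^2\le C^2\,\E_{\omega\sim\mu}\bigl|\check f(\omega)\bigr|^2=C^2\|\check f\|_{L^2(\Omega)}^2=C^2\|f\|_{\ch_k}^2\,,
\]
which in particular shows $Z_i$ is square-integrable, so all quantities below are finite. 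Since $Z_1,\dots,Z_q$ are i.i.d.,
\[
\E_\vomega\bigl|f(\x)-f_\vomega(\x)\bigr|^2=\mathrm{Var}\Bigl(\tfrac1q\textstyle\sum_{i=1}^q Z_i\Bigr)=\tfrac1q\,\mathrm{Var}(Z_1)\le\tfrac1q\,\E|Z_1|^2\le\frac{C^2\|f\|_{\ch_k}^2}{q}\,,
\]
which is exactly the asserted inequality, and it holds for every $\x\in\cx$.

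\textbf{Where the care is needed.} There is no genuine obstacle here, but two small points deserve attention. The complex-to-real bookkeeping matters: under $\inner{z,z'}=\Re(z\bar z')$ the variables $Z_i$ and the target $f(\x)$ are literally real numbers, so $|f(\x)-f_\vomega(\x)|^2$ is an ordinary square equal to the variance of the empirical mean, and the Cauchy--Schwarz step only invokes the elementary bound $|\Re(z\bar z')|\le|z|\,|z'|$ rather than the full Hilbert-space inequality. Secondly, $\check f$ is only guaranteed to lie in $L^2(\Omega)$ and need not be bounded, which is precisely why the estimate is on the second moment (equivalently, the variance) and not a uniform error bound --- the uniform approach would instead appeal to Theorem~\ref{thm:ker_apx}.
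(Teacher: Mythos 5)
Your proof is correct and follows the same route as the paper: it obtains $\check f$ from Theorem~\ref{thm:rkhs_embed} applied to the embedding $\x\mapsto\Psi^\x$, writes $f_\vomega(\x)$ as an average of i.i.d.\ terms with mean $f(\x)$, and bounds the variance by the second moment of a single term, which is at most $C^2\|f\|_{\ch_k}^2$ by the $C$-boundedness of $\psi$ and the isometry $\|\check f\|_{L^2(\Omega)}=\|f\|_{\ch_k}$. The only difference is that you spell out the real-valued bookkeeping and the variance-versus-second-moment step, which the paper leaves implicit.
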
 \noindent
As a result, if $\chi$ is a distribution on $\cx$, we have
\begin{eqnarray*}
\E_\vomega \|f-f_\vomega\|_{2,\chi}
& =   & \E_\vomega \sqrt{\E_\chi |f(\x) - f_\vomega(\x)|^2} \\
& \le & \sqrt{\E_\vomega \E_\chi |f(\x) - f_\vomega(\x)|^2} \\
& =   & \sqrt{\E_\chi \E_\vomega  |f(\x) - f_\vomega(\x)|^2}\\
  &\le &\frac{C\|f\|_{\ch_k}}{\sqrt{q}} \,.
\end{eqnarray*}
We next consider supervised learning with RFS. Let $\cy$ be a target (output)
space and let $\ell:\reals^t\times\cy\to \reals_+$ be a $\rho$-Lipschitz loss
function, i.e.\ for every $y\in\cy$,
$|\ell(y_1,y)-\ell(y_2,y)|\leq\rho|y_1-y_2|$.
Let $\cd$ be a distribution on $\cx\times\cy$. We define the loss of a
(prediction) function $\f:\cx\to \reals^t$ as
$L_\cd(\f) = \E_{(\x,y)\sim\cd}\ell(\f(\x),y)$. Let
$S = \{(\x_1,y_1),\ldots,(\x_m,y_m)\}$ denote $m$ i.i.d. examples sampled
from $\cd$.
We denote by $\ch_k^t$ the space of all functions
$\f=(f_1,\ldots,f_t):\cx\to\reals^t$ where $f_i\in \ch_k$ for every $i$.
$\ch_k^t$ is a Hilbert space with the inner product
$\inner{\f,\g}_{\ch^t_k} = \sum_{i=1}^t\inner{f_i,g_i}_{\ch_k}$.
Let $\hat{\f}$ be the function in $\ch^t_k$ that minimizes the
regularized empirical loss,
$$L_S^\lambda (\f) = \frac{1}{m}\sum_{i=1}^m\ell(\f(\x_i),y_i) +
  \lambda\| \f\|_{\ch^t_k}^2,$$ over all functions in $\ch^t_k$.
It is well established (see e.g. Corollary~13.8 in~\cite{shalev2014understanding}) that
for every $\f^\star\in \ch^t_k$,
\begin{equation}\label{eq:RLM_eq}
 \E_SL_\cd (\hat \f) \le L_\cd \left(\f^\star\right) +
  \lambda\|\f^\star\|_{\ch_k^t}^2 + \frac{2\rho^2}{\lambda m} \,.
\end{equation}
If we further assume that $\|\f^\star\|_{\ch^t_k}\le B$, for $B>0$, and set
$\lambda=\frac{\sqrt{2} \rho}{ \sqrt{m}B}$, we obtain that
\begin{equation}\label{eq:RLM_eq_2}
\E_SL_\cd (\hat \f) \le
  \inf_{\|\f^\star\|_{\ch^t_k}\le B} L_\cd \left(\f^\star\right) +
  \frac{\sqrt{8}\rho B}{ \sqrt{m}} \,.
\end{equation}
The additive term in \eqref{eq:RLM_eq_2} is optimal, up to a constant
factor. We would like to obtain similar bounds for
an algorithm that minimizes the regularized loss w.r.t.\ the embedding
$\Psi_{\vomega}$. Let $\hat{\f}_\vomega$ be the function that minimizes,
\begin{equation}\label{eq:frisk}
L_S^\lambda(\f) =
  \frac{1}{m}\sum_{i=1}^m \ell(\f(\x_i),y_i) +
  \lambda\| \f\|_{\ch^t_{k_\vomega}}^2 \,,
\end{equation}
over all functions in $\ch^t_{k_\vomega}$.
Note that in most settings $\hat{\f}_\vomega$ can be found efficiently by
defining a matrix $V \in \complex^{t\times q}$ whose $i$'th row is $\bv_i$,
and rewriting $\hat{\f}_\vomega$ as,
\[
\hat{\f}_\vomega (\x) =
\left(
  \inner{\bv_1,\Psi_\vomega(\x)},\ldots,\inner{\bv_t,\Psi_\vomega (\x)}
\right) \eqdef {V \Psi_\vomega (\x)} \,.
\]
We now can recast the empirical risk minimization of~\eqref{eq:frisk} as,
$$
L_S^\lambda(V) =
  \frac{1}{m}\sum_{i=1}^m\ell(V\Psi_\vomega (\x_i),y_i) +
    \lambda\| V\|_{\mathrm{\footnotesize F}}^2 \,.
$$

\begin{theorem}[Learning with RFS]\label{thm:rfs_learn}
For every $\f^\star\in\ch^t_k$,
\begin{equation}\label{eq:RLM_embed_eq}
  \E_\vomega\E_SL_\cd (\hat \f_\vomega) \le
    L_\cd \left(\f^\star\right) + \lambda\|\f^\star\|_{\ch^t_{k}}^2 +
    \frac{2\rho^2 C^2}{\lambda m} +
    \frac{\rho \|\f^\star\|_{\ch^t_{k}} C }{\sqrt{q}} \,.
\end{equation}
If we additionally impose the constraint ${\|\f^\star\|_{\ch^t_k}\le B}$
for $B>0$ and set $\lambda=\frac{\sqrt{2} \rho C}{\sqrt{m} B}$ we have,
\begin{equation}\label{eq:RLM_embed_eq_2}
\E_\vomega\E_SL_\cd (\hat \f_\vomega) \le
  \inf_{\|\f^\star\|_{\ch^t_k}\le B} L_\cd \left(\f^\star\right) +
  \frac{\sqrt{8}\rho BC}{ \sqrt{m}} + \frac{\rho B C }{\sqrt{q}} \,.
\end{equation}
\end{theorem}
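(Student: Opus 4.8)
The plan is to transport the regularized-loss-minimization (RLM) oracle inequality \eqref{eq:RLM_eq} from $\ch^t_k$ to the random space $\ch^t_{k_\vomega}$. The subtlety is that \eqref{eq:RLM_eq} bounds the excess loss of $\hat\f_\vomega$ in terms of an \emph{arbitrary} comparator lying in the \emph{same} space, so for a fixed draw $\vomega$ I must exhibit a comparator $\f^\star_\vomega\in\ch^t_{k_\vomega}$ whose loss and squared norm, after averaging over $\vomega$, are controlled by $L_\cd(\f^\star)$ and $\|\f^\star\|^2_{\ch^t_k}$.

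First I would record that $k_\vomega$ is $C^2$-bounded: $k_\vomega(\x,\x)=\frac1q\sum_{i=1}^q|\psi(\omega_i,\x)|^2\le C^2$ since $\psi$ is $C$-bounded. Hence the stability argument behind \eqref{eq:RLM_eq} applies to $\ch^t_{k_\vomega}$ — writing $\hat\f_\vomega=V\Psi_\vomega$ with regularizer $\lambda\|V\|_{\mathrm F}^2$, the empirical loss is $\rho C$-Lipschitz in $V$ because $\|\Psi_\vomega(\x)\|\le C$ — and gives, for every $\vomega$ and every $\f^\star_\vomega\in\ch^t_{k_\vomega}$,
\[
\E_S L_\cd(\hat\f_\vomega)\;\le\;L_\cd(\f^\star_\vomega)+\lambda\|\f^\star_\vomega\|^2_{\ch^t_{k_\vomega}}+\frac{2\rho^2C^2}{\lambda m}\,.
\]

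For the comparator I would take the random-feature approximation of $\f^\star$ used in Corollary~\ref{thm:func_apx}: let $\f^\star_\vomega=(f^\star_{1,\vomega},\ldots,f^\star_{t,\vomega})$ with $f^\star_{j,\vomega}(\x)=\frac1q\sum_{i=1}^q\inner{\check{f^\star_j}(\omega_i),\psi(\omega_i,\x)}=\inner{\bv_j,\Psi_\vomega(\x)}$, where $\bv_j=\frac1{\sqrt q}\big(\check{f^\star_j}(\omega_1),\ldots,\check{f^\star_j}(\omega_q)\big)\in\complex^q$. By Theorem~\ref{thm:rkhs_embed} applied to $k_\vomega$ with the embedding $\Psi_\vomega$, $\|f^\star_{j,\vomega}\|_{\ch_{k_\vomega}}\le\|\bv_j\|$; and $\E_\vomega\|\bv_j\|^2=\frac1q\sum_{i=1}^q\E_{\omega_i}|\check{f^\star_j}(\omega_i)|^2=\|\check{f^\star_j}\|^2_{L^2(\Omega)}=\|f^\star_j\|^2_{\ch_k}$, so summing over $j$ yields $\E_\vomega\|\f^\star_\vomega\|^2_{\ch^t_{k_\vomega}}\le\|\f^\star\|^2_{\ch^t_k}$. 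For the loss term, the $\rho$-Lipschitz property of $\ell$ together with Jensen's inequality (mirroring the derivation following Corollary~\ref{thm:func_apx}) gives $\E_\vomega L_\cd(\f^\star_\vomega)-L_\cd(\f^\star)\le\rho\,\E_{(\x,y)}\sqrt{\sum_{j=1}^t\E_\vomega|f^\star_{j,\vomega}(\x)-f^\star_j(\x)|^2}\le\rho\,\E_{(\x,y)}\sqrt{\sum_{j=1}^t C^2\|f^\star_j\|^2_{\ch_k}/q}=\rho C\|\f^\star\|_{\ch^t_k}/\sqrt q$, the middle step being Corollary~\ref{thm:func_apx} applied coordinatewise.

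Taking $\E_\vomega$ of the RLM inequality and substituting the two estimates gives \eqref{eq:RLM_embed_eq}. Imposing $\|\f^\star\|_{\ch^t_k}\le B$ and choosing $\lambda=\sqrt2\,\rho C/(\sqrt m\,B)$ balances $\lambda B^2$ against $2\rho^2C^2/(\lambda m)$ — each becomes $\sqrt2\,\rho BC/\sqrt m$, summing to $\sqrt8\,\rho BC/\sqrt m$ — and taking the infimum over admissible $\f^\star$ yields \eqref{eq:RLM_embed_eq_2}. I expect the only non-routine point to be the first step: checking that the cited oracle inequality indeed passes to the random, merely $C^2$-bounded kernel $k_\vomega$ with exactly the constant $2\rho^2C^2/(\lambda m)$ (morally, the $\rho C$-Lipschitzness of the loss in $V$ replaces the $\rho$-Lipschitzness of the normalized-kernel case); everything afterward is bookkeeping with Jensen's inequality and Corollary~\ref{thm:func_apx}.
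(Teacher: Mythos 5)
Your proposal is correct and follows essentially the same route as the paper's proof: apply the RLM oracle inequality \eqref{eq:RLM_eq} in $\ch^t_{k_\vomega}$ with the constant inflated to $2\rho^2C^2/(\lambda m)$, take the comparator $f^\star_\vomega(\x)=\frac1q\sum_i\inner{\check f^\star(\omega_i),\psi(\omega_i,\x)}$, bound its expected squared norm via Theorem~\ref{thm:rkhs_embed} and its expected loss via Lipschitzness, Jensen, and Corollary~\ref{thm:func_apx}, then tune $\lambda$. Your explicit justification of the $2\rho^2C^2/(\lambda m)$ term via the $\rho C$-Lipschitzness of the loss in $V$ is a point the paper asserts without elaboration, and your coordinatewise treatment of general $t$ is slightly more complete than the paper's, which restricts to $t=1$ for simplicity.
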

We note that for \nrme\ RFS (i.e.\ when $C=1$), if the number of random
features is proportional to the number of examples, then the error terms in
the bounds (\ref{eq:RLM_embed_eq_2}) and \eqref{eq:RLM_eq_2} are the same up
to a multiplicative factor. Since the error term in~\eqref{eq:RLM_eq_2} is
optimal up to a constant factor, we get that the same holds true
for~\eqref{eq:RLM_embed_eq_2}.

\begin{proof}
For simplicity, we analyze the case $t=1$. Since $k_\vomega$ is $C$-bonded,
we have from~\eqref{eq:RLM_eq} that,
\[
  \E_SL_\cd \left(f\right) \le L_\cd \left(f_\vomega^\star\right) +
    \lambda\|f_\vomega^\star\|_{\ch_{k_\vomega}}^2 +
    \frac{2\rho^2C^2}{\lambda m}\,.
\]
Hence, it is enough to show that $\E_\vomega
\|f_\vomega^\star\|_{\ch_{k_\vomega}}^2 \le \|f^\star\|_{\ch_k}^2$ and
$\E_\vomega L_\cd \left(f_\vomega^\star\right) \le L_\cd \left(f^\star\right)
  + \frac{\rho \|f^\star\|_{\ch_k} C}{\sqrt{q}}$. Indeed, since
$$f^\star_\vomega(\x) =
\left\langle\frac{(\check{f}^\star(\omega_1),\ldots,\check{f}^\star(\omega_q))}
  {\sqrt{q}},\Psi_\vomega(\x)\right\rangle,$$
we have, by Theorem \ref{thm:rkhs_embed},
\begin{eqnarray*}
\E_\vomega \|f_\vomega^\star\|_{\ch_{k_\vomega}}^2 &\le&
  \E_\vomega
  \left[\frac{\sum_{i=1}^q\left|\check{f}^\star(\omega_i)\right|^2} {q}\right]
  \\
  &=& \frac{1}{q}\sum_{i=1}^q\E_{\omega_i}
    \left|\check{f}^\star(\omega_i)\right|^2
    \\
  &=& \|f^\star\|^2_{\ch_k},
\end{eqnarray*}
and similarly,
\begin{eqnarray*}
\E_\vomega L_\cd \left(f_\vomega^\star\right)
  =
  \E_{\vomega}\E_{\cd}l(f_\vomega^\star(\x),y)
  =
  \E_{\cd}\E_{\vomega}l(f_\vomega^\star(\x),y) \,.
\end{eqnarray*}
Now, from the $\rho$-Lipschitzness of $\ell$ and Theorem~(\ref{thm:func_apx})
we obtain,
\begin{eqnarray*}
\E_{\vomega}\ell(f_\vomega^\star(\x),y) &\le& \ell(f^\star(\x),y) +
  \rho \E_{\vomega} |f^\star(\x)-f_\vomega^\star(\x)|
\\
&\le& \ell(f^\star(\x),y) + \rho \sqrt{\E_{\vomega} |f^\star(\x)-f_\vomega^\star(\x)|^2}
\\
&\le& \ell(f^\star(\x),y) + \frac{\rho \|f^\star\|_{\ch_k} C}{\sqrt{q}},
\end{eqnarray*}
concluding the proof.
\end{proof}

\section{Random feature schemes for basic spaces}
\label{rfs4basic:sec}
In order to apply Theorem \ref{thm:rfs_learn}, we need to control the boundedness of the
generated features. Consider the RFS generation procedure, given in
Algorithm~\ref{alg:skel_rfs}, which employs multiplications of features
generated from basic RFSs. If each basic RFS is $C$-bounded, then every feature
that is a multiplication of $t$ basic features is $C^t$-bounded. In light of
this, we would like have RFSs for the basic spaces whose norm is as small as
possible.  The best we can hope for is {\em \nrme}\ RFSs---namely, RFSs with norm of 1.
We first describe such RFSs
for several kernels including the Gaussian kernel on $\reals^d$, and the
standard inner product on $\mathbb S^0$ and $\mathbb S^1$.  Then, we discuss
the standard inner product on $\mathbb S^{d-1}$ for $d\ge 3$. In this case, we
show that the smallest possible norm for an RFS is $\sqrt{d/2}$.
Hence, if the basic spaces are $\mathbb S^{d-1}$ for $d\ge 3$, one might
prefer to use other kernels such as the Gaussian kernel.

\begin{example}[Binary coordinates]\label{exam:binary}\normalfont
Let $\cx = \{\pm 1\}$ and $k(x,x')=xx'$. In this case the deterministic
identity RFS $\psi(\omega,x) = x$ is \nrme.
\end{example}

\begin{example}[One dimensional sphere]\label{exam:one_dim}\normalfont
Let $\cx = \mathbb{T}$ and $k(z,z')=\inner{z,z'}$. Let
$\psi(\omega,\z) = \z^\omega$, where $\omega$ is either $-1$ or $+1$ with
equal probability. Then, $\psi$ is a \nrme\ RFS since
\[
  \E_{\omega\sim\mu} \psi(\omega,z)\overline{\psi(\omega,z')} =
    \frac{z\overline{z'}+\overline{z}z'}{2} = \inner{z,z'} \,.
\]
\end{example}

\begin{example}[Gaussian kernel]\label{exam:gauss}\normalfont
Let $\cx = \mathbb R^d$ and $k(\x,\x')=e^{-\frac{a^2\|\x-\x'\|^2}{2}}$,
where $a>0$ is a constant. The Gaussian RFS is
$\psi(\vomega,\x) = e^{ia\inner{\vomega,\x}}$, where $\vomega \in \mathbb R^d$
is the standard normal distribution. Then, $\psi$ is a \nrme\ RFS, as implied
by~\cite{RahimiRe07}.
\end{example}

\begin{example}[Categorical coordinates]\label{exam:categorical}\normalfont
Let $\cx = [n]$ and define $k(x,x')=1$ if $x=x'$ and $0$ otherwise.
In this case
$\psi(\omega,x) = e^{\frac{i\omega x}{2 \pi n}}$, where $\omega$ is distributed
uniformly over $[n]$, is a \nrme\ RFS since,
\[
\E_{\omega\sim\mu} \psi(\omega,x)\overline{\psi(\omega,x')} =
  \E_\omega e^{\frac{i\omega (x-x')}{2 \pi n}} = \begin{cases}
  1 & x=x' \\ 0 & x\ne x'
\end{cases} \,.
\]
\end{example}

Examples \ref{exam:binary} and \ref{exam:one_dim} show that the standard inner
product on the sphere in one and two dimensions admits a \nrme\ RFS. We next
examine the sphere $\sphere^{d-1}$ for $d\geq3$. In this case, we show a
construction of a $\sqrt{{d}/{2}}$-bounded RFS. Furthermore, we show that
it is the best attainable bound. Namely, any RFS for
$\sphere^{d-1}$ will necessarily have a norm of at least $\sqrt{d/2}$. In
particular, there does not exist a \nrme\ RFS when $d\ge 3$.

\begin{example}[$\sphere^{d-1}$ for $d\ge 3$]\label{exam:sphere}\normalfont
Let $\mu$ be the uniform distribution on $\Omega = [d]\times \{-1,+1\}$.
Define $\psi:\Omega\times\sphere^{d-1}\to\complex$ for $\omega=(j,b)$ as
$\psi(\omega,\x) = \sqrt{{d}/{2}}(x_j + i b x_{j+1})$,
where we use the convention $x_{d+1}:=x_1$. Now, $\psi$ is
a $\sqrt{{d}/{2}}$-bounded RFS as,
\begin{eqnarray*}
&& \E_{(j,b)\sim\mu} \psi( (j,b),\x)\overline{\psi((j,b),\x')}\\
  &&= \frac{\frac{d}{2}\sum_{j=1}^d \left[
      (x_j+ix_{j+1})(x'_j-ix'_{j+1})+(x_j-ix_{j+1})(x'_j+ix'_{j+1})
      \right]}{2d} \\
  &&= \frac{\sum_{j=1}^d \left[x_jx'_j+x_{j+1}x'_{j+1}\right]}{2} \\
  &&= \inner{\x,\x'} \,.
\end{eqnarray*}
\end{example}
\noindent
We find it instructive to compare the RFS above to the following
$\sqrt{d}$-bounded RFSs.
\begin{example}
\label{exam:sphere2}\normalfont
Let $\mu$ be the uniform distribution on $\Omega = \sphere^{d-1}$ and define
$\psi:\Omega\times\sphere^{d-1}\to\reals$ as $\psi(\w,\x) =
\sqrt{d}\inner{\w,\x}$. We get that,
$$
\E_{\w\sim\mu}\left[\psi(\w,\x)\,\psi(\w,\x')\right]
  \, = \, d\E_{\w\sim\mu} \inner{\w,\x}\inner{\w,\x'} \, = \,
    d\,\inner{\x, W \x'} ~,
$$
where $W_{i,j} = \E_{\w\sim\mu}\left[w_i w_j\right]$. Since $\w$ is
distributed uniformly on $\sphere^{d-1}$, $\E\left[w_i w_j\right]= 0$ for
$i\neq j$ and $\E\left[w_i^2\right] = 1 / d$. Thus, we have
$W = (1/d)\mathbb I$ and therefore
$\E_{\w\sim\mu}\left[\psi(\w,\x)\,\psi(\w,\x')\right]=\inner{\x,\x'}$. A
similar result still holds when $\psi(\omega,\x) = \sqrt{d}\,x_\omega$ where
$\omega\in [d]$ is distributed uniformly.
\end{example}

\noindent
To conclude the section, we prove that $\sqrt{{d}/{2}}$-boundedness is
optimal for RFS on $\sphere^{d-1}$.

\begin{theorem}\label{thm:sphere_lower_bound}
Let $d\ge 1$ and $\epsilon > 0$. There does not exist
a $(\sqrt{{d}/{2}}-\epsilon)$-bounded RFS for the kernel
$k(\x,\x')=\inner{\x,\x'}$ on $\sphere^{d-1}$.
\end{theorem}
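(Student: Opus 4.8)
The plan is to prove the equivalent statement that every $C$-bounded RFS $\psi$ for $k(\x,\x')=\inner{\x,\x'}$ on $\sphere^{d-1}$ must satisfy $C\ge\sqrt{d/2}$; a $(\sqrt{d/2}-\epsilon)$-bounded RFS is then impossible. Let $\sigma$ be the uniform probability measure on $\sphere^{d-1}$, so $\int x_ix_j\,d\sigma=\delta_{ij}/d$. For each $\omega$ I would first isolate the degree-one part of the feature $\psi(\omega,\cdot)$: write $\psi(\omega,\cdot)=\ell_\omega+r_\omega$ where $\ell_\omega(\x)=\inner{\ba_\omega,\x}+i\inner{\bb_\omega,\x}$ with $(\ba_\omega+i\bb_\omega)_k:=d\int\psi(\omega,\x)x_k\,d\sigma(\x)$ and $r_\omega$ the remainder. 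By construction $\int r_\omega(\x)x_k\,d\sigma=0$ for all $k$, hence $\ell_\omega\perp r_\omega$ in $L^2(\sigma)$ and $\|\ell_\omega\|^2_{L^2(\sigma)}=(\|\ba_\omega\|^2+\|\bb_\omega\|^2)/d$.

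Next I would integrate the defining identity $\E_\omega[\psi(\omega,\x)\overline{\psi(\omega,\x')}]=\inner{\x,\x'}$ against $x_ix'_j\,d\sigma(\x)\,d\sigma(\x')$ (Fubini is legitimate since $\psi$ is bounded and jointly measurable). Only the degree-one parts survive, so this collapses, after taking real parts, to $\E_\omega[\ba_\omega\ba_\omega^{\top}+\bb_\omega\bb_\omega^{\top}]=I_d$, and therefore, taking traces, $\E_\omega[\|\ba_\omega\|^2+\|\bb_\omega\|^2]=d$.

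The crucial step is to show $r_\omega=0$ for almost every $\omega$. Let $N_\omega:=\int|\psi(\omega,\x)|^2\,d\sigma(\x)$. On the one hand $\E_\omega N_\omega=\int k(\x,\x)\,d\sigma=1$; on the other hand $N_\omega\ge\|\ell_\omega\|^2_{L^2(\sigma)}=(\|\ba_\omega\|^2+\|\bb_\omega\|^2)/d$, whose $\omega$-expectation is also $1$ by the previous step. Hence $N_\omega=(\|\ba_\omega\|^2+\|\bb_\omega\|^2)/d$ for a.e.\ $\omega$, i.e.\ $r_\omega=0$ in $L^2(\sigma)$, so $\psi(\omega,\cdot)=\ell_\omega$ $\sigma$-a.e. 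Since $\ell_\omega$ is a polynomial (hence continuous) and $|\psi(\omega,\cdot)|\le C$ everywhere, we conclude $|\ell_\omega(\x)|\le C$ for every $\x\in\sphere^{d-1}$, for a.e.\ $\omega$.

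Finally, set $P_\omega:=\ba_\omega\ba_\omega^{\top}+\bb_\omega\bb_\omega^{\top}\succeq 0$, a matrix of rank at most two. For a.e.\ $\omega$, $\lambda_{\max}(P_\omega)=\sup_{\x\in\sphere^{d-1}}\bigl(\inner{\ba_\omega,\x}^2+\inner{\bb_\omega,\x}^2\bigr)=\sup_\x|\ell_\omega(\x)|^2\le C^2$, and since $P_\omega$ has at most two nonzero eigenvalues, $\|\ba_\omega\|^2+\|\bb_\omega\|^2=\trace(P_\omega)\le 2\lambda_{\max}(P_\omega)\le 2C^2$. Taking $\E_\omega$ gives $d\le 2C^2$, i.e.\ $C\ge\sqrt{d/2}$, which proves the theorem. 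I expect the main obstacle to be the forced-linearity step: naive uses of boundedness---Cauchy--Schwarz on an individual feature, or trace identities by themselves---only yield $C\ge 1$ and miss the factor $\sqrt{d/2}$ entirely. The point is that the kernel $\inner{\x,\x'}$ places no mass on higher-order spherical harmonics, and the total $L^2$ budget $\E_\omega N_\omega=1$ exactly matches the degree-one budget; this forces every feature to be purely linear, and only then does the two-dimensionality of $\complex$ (the rank-$\le 2$ of $P_\omega$) produce the $d/2$ bound.
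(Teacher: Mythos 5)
Your proof is correct, and it reaches the paper's conclusion by a genuinely different route, although the two arguments share the same skeleton: (i) the features $\psi(\omega,\cdot)$ are forced to be linear in $\x$, (ii) the total linear ``energy'' satisfies $\E_\omega\|\z_\omega\|^2=d$ for the coefficient vector $\z_\omega=\ba_\omega+i\bb_\omega\in\complex^d$, and (iii) a complex linear form on the real sphere attains squared modulus at least half its coefficient norm. Where you differ is in how (i) and (iii) are obtained. For (i), the paper proves Lemma~\ref{lem:closed_to_lin_comb}: for each fixed $\ba\in\sphere^{d-1}$, $\psi(\omega,\ba)=\sum_ja_j\psi(\omega,\e_j)$ for a.e.\ $\omega$, by expanding $\bigl\|\Psi^{\ba}-\sum_ja_j\Psi^{\e_j}\bigr\|^2_{L^2(\Omega)}$ and using that $L^2(\Omega)$ inner products reproduce $k$; it then passes to a countable dense set of $\ba$'s to swap the quantifiers. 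You instead project each feature onto the degree-one part in $L^2(\sphere^{d-1})$ and kill the remainder by a tightness argument (the budget $\E_\omega N_\omega=1$ is exactly exhausted by the linear part), which needs Fubini and the uniform measure on the sphere but avoids the dense-set bookkeeping; you then need continuity of $\ell_\omega$ to upgrade the a.e.\ bound $|\ell_\omega|\le C$ to an everywhere bound, a step the paper sidesteps by evaluating at a point of the dense set. For (iii), your inequality $\trace(P_\omega)\le 2\lambda_{\max}(P_\omega)$ for the rank-$\le 2$ matrix $P_\omega=\ba_\omega\ba_\omega^\top+\bb_\omega\bb_\omega^\top$ is exactly the paper's Lemma~\ref{fact} in matrix clothing, since $\lambda_{\max}(P_\omega)=\sup_{\x\in\sphere^{d-1}}|\inner{\z_\omega,\x}|^2$ and $\trace(P_\omega)=\|\z_\omega\|^2$. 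A final stylistic difference: the paper extracts a single ``good'' $\omega$ of positive probability and concedes an $\epsilon$ of slack, whereas your argument runs entirely in expectation and delivers $C\ge\sqrt{d/2}$ in one clean line; both yield the stated theorem, and Example~\ref{exam:sphere} shows the constant is tight either way.
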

\noindent Before proving the theorem, we need the following 
lemmas.
\begin{lemma}\label{fact}
Let $\z\in\complex^d$. There exists $\ba\in\sphere^{d-1}$ such that
$\left|\sum_{j} a_j z_j\right|^2 \ge \frac{1}{2}\|\z\|^2$.
\end{lemma}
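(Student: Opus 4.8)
The plan is to reduce the claim to a statement about the largest eigenvalue of a rank-at-most-two positive semidefinite matrix. Write $\z = \x + i\y$ with $\x,\y\in\reals^d$, so that $\|\z\|^2 = \|\x\|^2+\|\y\|^2$, and observe that for any real vector $\ba$,
\[
\Bigl|\textstyle\sum_j a_j z_j\Bigr|^2 = \inner{\ba,\x}^2 + \inner{\ba,\y}^2 = \ba^\top M \ba, \qquad M := \x\x^\top + \y\y^\top .
\]
The matrix $M$ is positive semidefinite, and being a sum of two rank-one matrices it has rank at most $2$; moreover $\trace(M) = \|\x\|^2 + \|\y\|^2 = \|\z\|^2$.

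Next I would invoke the elementary fact that the largest eigenvalue of a positive semidefinite matrix is at least its trace divided by its rank: since $M$ has at most two nonzero eigenvalues, both nonnegative and summing to $\|\z\|^2$, the larger one is at least $\|\z\|^2/2$. Taking $\ba\in\sphere^{d-1}$ to be a unit eigenvector of $M$ for that eigenvalue yields $\ba^\top M \ba = \lambda_{\max}(M)\ge\tfrac12\|\z\|^2$, which is exactly the assertion of the lemma. (The case $\z=0$ is vacuous, and the case $\rank(M)\le 1$ only makes the bound easier.)

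I expect essentially no obstacle here; the one point worth care is the observation that $\rank(M)\le 2$ \emph{independently of} $d$ — it is tempting but wasteful to average $\inner{\ba,\z}$ over the whole sphere $\sphere^{d-1}$, which would only give a factor $1/d$ rather than $1/2$. If one prefers to avoid eigenvalue language altogether, the same argument can be phrased concretely: when $\x$ and $\y$ are linearly dependent the bound is immediate by choosing $\ba$ along their common direction, and otherwise one restricts to the two-dimensional span of $\x,\y$ and maximizes the quadratic form $\inner{\ba,\x}^2+\inner{\ba,\y}^2$ over the unit circle in that plane, again using that the two critical values of this form sum to $\|\x\|^2+\|\y\|^2$ so the maximum is at least half of it.
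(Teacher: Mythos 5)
Your proof is correct, but it reaches the conclusion by a different mechanism than the paper. Both arguments begin by splitting $\z=\valpha+i\vbeta$ into real and imaginary parts and both ultimately rest on the pigeonhole observation that two nonnegative quantities summing to $\|\z\|^2$ cannot both be below $\tfrac12\|\z\|^2$; the difference is where that observation is applied. The paper applies it directly to $\|\valpha\|^2$ and $\|\vbeta\|^2$: after replacing $\z$ by $i\z$ if necessary (which leaves $\left|\sum_j a_jz_j\right|$ unchanged), one may assume $\|\valpha\|^2\ge\tfrac12\|\z\|^2$, take $\ba=\valpha/\|\valpha\|$, and simply discard the $\inner{\ba,\vbeta}^2$ term. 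You instead apply it to the two eigenvalues of $M=\valpha\valpha^\top+\vbeta\vbeta^\top$, picking $\ba$ to be the top eigenvector, so you need the spectral theorem and the fact that $M$ has rank at most $2$ with $\trace(M)=\|\z\|^2$. What your route buys is the exact optimum: the maximum of $\left|\sum_j a_jz_j\right|^2$ over the sphere equals $\lambda_{\max}(M)$, which makes it transparent that the constant $\tfrac12$ is tight (take $\valpha\perp\vbeta$ of equal norm) and, as you note, that naive averaging over $\sphere^{d-1}$ would only yield $\|\z\|^2/d$ --- a useful sanity check given that the surrounding Theorem~\ref{thm:sphere_lower_bound} is precisely about the gap between $\sqrt{d/2}$ and $\sqrt{d}$. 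What the paper's route buys is economy: an explicit $\ba$ and no linear algebra beyond Cauchy--Schwarz, at the cost of not exhibiting the optimal direction. Either proof is acceptable here.
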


\begin{proof}
Let us write $\z=\valpha +  i\vbeta$ where $\valpha,\vbeta\in\reals^d$. We
thus have $\|\z\|^2 = \|\valpha\|^2 + \|\vbeta\|^2$. We can further assume
that $ \|\valpha\|^2 \geq \frac{1}{2}\|\z\|^2$ as otherwise we can replace
$\z$ with $i\z$. Let us define $\ba$ as $\valpha/\|\valpha\|$. We now obtain
that,
\[
\left|\displaystyle \sum_{j}a_jz_j\right|^2 \ge \inner{\ba,\valpha}^2 =
  \|\valpha\|^2 \geq \frac{1}{2}\|\z\|^2 \,,
\]
which concludes the proof.
\end{proof}

\begin{lemma}\label{lem:closed_to_lin_comb}
Let $(\psi,\mu)$ be an RFS for the kernel $k(\x,\x')=\inner{\x,\x'}$ on
$\sphere^{d-1}$ and let $\mathbf a \in\sphere^{d-1}$. Then, for almost all
$\omega$ we have
$\psi({\omega},\mathbf a) = \sum_{j=1}^d a_j\psi({\omega},\e_j)$.
\end{lemma}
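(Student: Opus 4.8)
The plan is to exploit the fact that an RFS reproduces the kernel in expectation, together with the rigidity of the inner-product kernel on the sphere. First I would record the ``diagonal'' consequence of the RFS identity: since $k$ is normalized, $\E_\omega |\psi(\omega,\x)|^2 = k(\x,\x) = 1$ for every $\x$, and more generally $\E_\omega \psi(\omega,\x)\overline{\psi(\omega,\x')} = \inner{\x,\x'}$. The strategy is then to consider, for a fixed $\ba\in\sphere^{d-1}$, the $L^2(\Omega)$-valued random element $\Psi^\ba = \psi(\cdot,\ba)$ and its counterparts $\Psi^{\e_j} = \psi(\cdot,\e_j)$, as in~\eqref{eqn:psi-embedding}. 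Because $k(\x,\x') = \inner{\Psi^\x,\Psi^{\x'}}_{L^2(\Omega)}$, the Gram matrix of the vectors $\Psi^\ba, \Psi^{\e_1},\ldots,\Psi^{\e_d}$ in $L^2(\Omega)$ equals the Gram matrix of $\ba,\e_1,\ldots,\e_d$ in $\reals^d$. Since $\ba = \sum_j a_j \e_j$ is a linear combination of the $\e_j$ with these exact Gram relations, the vector $\Psi^\ba - \sum_j a_j \Psi^{\e_j}$ must have zero norm in $L^2(\Omega)$.

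Concretely, I would compute
\[
\left\| \Psi^\ba - \sum_{j=1}^d a_j \Psi^{\e_j} \right\|_{L^2(\Omega)}^2
= \|\Psi^\ba\|^2 - 2\sum_j a_j \inner{\Psi^\ba,\Psi^{\e_j}} + \sum_{j,l} a_j a_l \inner{\Psi^{\e_j},\Psi^{\e_l}},
\]
and then substitute $\|\Psi^\ba\|^2 = k(\ba,\ba) = 1$, $\inner{\Psi^\ba,\Psi^{\e_j}} = k(\ba,\e_j) = a_j$, and $\inner{\Psi^{\e_j},\Psi^{\e_l}} = k(\e_j,\e_l) = \delta_{jl}$. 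This gives $1 - 2\sum_j a_j^2 + \sum_j a_j^2 = 1 - \|\ba\|^2 = 0$. Hence $\Psi^\ba = \sum_j a_j \Psi^{\e_j}$ as elements of $L^2(\Omega)$, which is exactly the assertion that $\psi(\omega,\ba) = \sum_j a_j \psi(\omega,\e_j)$ for $\mu$-almost every $\omega$.

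The only subtlety, and the single point I would be careful about, is the interchange between ``equality in $L^2(\Omega)$'' and ``equality for almost every $\omega$'': these coincide because vanishing $L^2$ norm means the integrand is zero almost everywhere, so no real obstacle arises. Everything else is a routine expansion of a squared norm using bilinearity of $\inner{\cdot,\cdot}$ over the reals (recall the paper's convention that $L^2(\Omega)$ is treated as a real Hilbert space, so these manipulations are legitimate even though $\psi$ is complex-valued). I expect no genuine difficulty; the lemma is essentially the observation that an isometric embedding of a finite point configuration into a Hilbert space preserves affine-linear dependencies.
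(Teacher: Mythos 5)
Your proposal is correct and follows essentially the same route as the paper: expand $\left\|\Psi^\ba - \sum_j a_j \Psi^{\e_j}\right\|_{L^2(\Omega)}^2$ by bilinearity, substitute the kernel values $k(\ba,\ba)=1$, $k(\ba,\e_j)=a_j$, $k(\e_j,\e_l)=\delta_{jl}$, and conclude the norm is $\|\ba-\sum_j a_j\e_j\|^2=0$, hence almost-everywhere equality. Your added remark about the real-Hilbert-space convention and the $L^2$-versus-pointwise subtlety is a fine clarification but does not change the argument.
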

\begin{proof}
  Let us examine the difference between
$\Psi^\ba\eqdef\psi(\cdot,\ba)$
and $\sum_ja_j\Psi^{\e_j}\eqdef\sum_ja_j\psi(\cdot,\e_j)$,
\begin{eqnarray*}
&& \left\|\Psi^\ba - \sum_{i=1}^d a_i\Psi^{\e_i}\right\|_{L^2(\Omega)}^2\\
  &&= \inner{\Psi^\ba,\Psi^\ba}_{L^2} +
      \sum_{i,j=1}^d a_ia_j\inner{\Psi^{\e_i},\Psi^{\e_j}}_{L^2} -
      2 \sum_{i=1}^d a_i\inner{\Psi^{\ba},\Psi^{\e_i}}_{L^2}
  \\
  &&= \inner{\ba,\ba} + \sum_{i,j=1}^d a_ia_j\inner{\e_i,\e_j} -2 \sum_{i=1}^d a_i\inner{\ba,\e_i}
  \\
  &&= \left\|\ba - \sum_{i=1}^d a_i\e_i\right\|^2 = 0.
\end{eqnarray*}
\end{proof}

\begin{proof}[Proof of Theorem~\ref{thm:sphere_lower_bound}]
Let $\psi:\sphere^{d-1}\times\Omega\to \complex$ be an RFS for $k(\x,\x') =
\inner{\x,\x'}$ and let $\epsilon>0$. We next show that $\psi$ is not
$\sqrt{{d}/{2}-\epsilon}$\,\,-\,bounded.
Let $A\subset \sphere^{d-1}$ be a dense and countable set. From
Lemma~\eqref{lem:closed_to_lin_comb} and the fact that sets of measure
zero are closed under countable union, it follows that for almost every
$\omega$ and all $\ba\in A$ we have
$\psi({\omega},\mathbf a) = \sum_{i=1}^d a_i\psi({\omega},\e_i)$.
Using the linearity of expectation we know that,
\[
\E_{\omega\sim\mu}\sum_{i=1}^d|\psi(\omega,\e_i)|^2 =
  \sum_{i=1}^d\E_{\omega\sim\mu}|\psi(\omega,\e_i)|^2 =
  \sum_{i=1}^d \inner{\e_i,\e_i} = d \,.
\]
Hence, with a non-zero probability we get,
\begin{equation}\label{eq:1}
\sum_{i=1}^d|\psi(\omega,\e_i)|^2>d-\epsilon,
\end{equation}
and,
\begin{equation} \label{eq:1a}
\forall \ba\in A,\;\psi({\omega},\mathbf a) =
  \sum_{i=1}^d a_i\psi({\omega},\e_i) \,.
\end{equation}
Let us now fix $\omega$ for which \eqref{eq:1} holds.
From Lemma~\eqref{fact} there exists $\tilde{\ba}\in \sphere^{d-1}$ such that
$\left|\sum_{i} \tilde{a}_i\psi(\omega,\e_i)\right|^2
  \ge\frac{d-\epsilon}{2}\,.$
Since $A$ is dense in $\sphere^{d-1}$ there is a vector $\ba\in A$ for which
$\left|\sum_{i}a_i\psi(\omega,\e_i)\right|^2\ge\frac{d}{2}-\epsilon$.
Finally, from~\eqref{eq:1} it follows that
$|\psi(\omega,\ba)|^2\ge \frac{d}{2}-\epsilon$.
\end{proof}

\section{Compositional random feature schemes}
\label{comp_rfs:sec}
Compositional kernels are obtained by sequentially multiplying and averaging
kernels. Hence, it will be useful to have RFSs for multiplications and
averages of kernels. The proofs of Lemmas \ref{lem:rfs_avg} and
\ref{lem:rfs_mul} below are direct consequences of properties of kernel
spaces and RFSs and thus omitted.
\begin{lemma}\label{lem:rfs_avg}
Let $(\psi^1,\mu^1),(\psi^2,\mu^2),\ldots$ be RFSs for the kernels
$k^1,k^2,\ldots$ and let $(\alpha_i)_{i=1}^\infty$ be a sequence of
non-negative numbers that sum to one. Then, the following procedure
defines an RFS for the kernel $k(\x,\x')=\sum_{i=1}^n\alpha_i k^i(\x,\x')$.
\begin{enumerate}
\item Sample $i$ with probability $\alpha_i$
\item Choose $\omega\sim\mu^i$
\item Generate the feature $\x\mapsto \psi^i_{\omega}(\x)$
\end{enumerate}
\end{lemma}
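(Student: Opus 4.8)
The plan is to exhibit the RFS explicitly as a mixture (disjoint union) of the given ones and verify the defining identity by a one-line computation. Concretely, let $(\Omega^i,\mu^i)$ be the measure space underlying $\psi^i$, set $\Omega = \bigsqcup_i \{i\}\times\Omega^i$ with the $\sigma$-algebra generated by the components, and define a measure $\mu$ on $\Omega$ by $\mu(\{i\}\times E) = \alpha_i\,\mu^i(E)$ for measurable $E\subseteq\Omega^i$. Since $\sum_i\alpha_i = 1$ and each $\mu^i$ is a probability measure, $\mu$ is a well-defined probability measure. Define $\psi:\Omega\times\cx\to\complex$ by $\psi\bigl((i,\omega),\x\bigr) = \psi^i(\omega,\x)$; this is measurable because its restriction to each component $\{i\}\times\Omega^i$ is measurable. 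Drawing $(i,\omega)\sim\mu$ is exactly the sampling procedure described in the statement, and $\x\mapsto\psi\bigl((i,\omega),\cdot\bigr)$ is the generated feature.

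Before the main computation I would record that $k$ is a normalized kernel, so that the claim is meaningful: each $k^i$ is normalized, hence $1$-bounded by Cauchy--Schwarz, so $\sum_i\alpha_i k^i(\x,\x')$ converges absolutely; it is PSD as a (limit of) convex combination of PSD matrices; and $k(\x,\x) = \sum_i\alpha_i k^i(\x,\x) = \sum_i\alpha_i = 1$.

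The core step is then the identity, valid for every $\x,\x'\in\cx$,
\[
\E_{(i,\omega)\sim\mu}\!\left[\psi\bigl((i,\omega),\x\bigr)\,\overline{\psi\bigl((i,\omega),\x'\bigr)}\right]
 = \sum_i \alpha_i\,\E_{\omega\sim\mu^i}\!\left[\psi^i(\omega,\x)\,\overline{\psi^i(\omega,\x')}\right]
 = \sum_i \alpha_i\,k^i(\x,\x') = k(\x,\x') \,,
\]
where the middle equality is the RFS property of each $\psi^i$. The interchange of the sum and the expectation is licensed by Tonelli/dominated convergence: for each $i$, $\E_{\omega\sim\mu^i}\bigl|\psi^i(\omega,\x)\,\overline{\psi^i(\omega,\x')}\bigr| \le \sqrt{k^i(\x,\x)\,k^i(\x',\x')} = 1$ by Cauchy--Schwarz in $L^2(\Omega^i)$, so the integrand over the $i$-th component is bounded in absolute value by $\alpha_i$, and $\sum_i\alpha_i = 1<\infty$. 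This is precisely the definition of an RFS for $k$, completing the argument.

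The only point requiring any care --- and it is purely routine --- is the measure-theoretic bookkeeping for a countably infinite mixture: $\sigma$-additivity of $\mu$ on the disjoint union, joint measurability of $\psi$, and the Tonelli interchange above; for the finite sum literally written in the statement these are immediate. I do not expect any genuine obstacle, which is why the authors state the proof can be omitted.
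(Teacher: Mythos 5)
Your proof is correct and is exactly the direct verification the paper has in mind when it declares the lemma ``a direct consequence of properties of kernel spaces and RFSs'' and omits the argument: realize the sampling procedure as the mixture measure on the disjoint union $\bigsqcup_i \{i\}\times\Omega^i$ and check the defining expectation identity, with Tonelli justifying the interchange in the countable case. The extra care you take with measurability, normalization of $k$, and absolute convergence (note the $\sum_{i=1}^n$ in the statement is a typo for $\sum_{i=1}^\infty$) is all that could possibly need saying here.
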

\begin{lemma}\label{lem:rfs_mul}
Let $(\psi^1,\mu^1),\ldots,(\psi^n,\mu^n)$ be RFSs for the kernels
$k^1,\ldots,k^n$. The following scheme is an RFS for the kernel
$k(\x,\x')=\prod_{i=1}^n k^i(\x,\x')$. Sample
$\omega_1,\ldots,\omega_n\sim\mu^1\times\ldots\times\mu^n$ and generate the
feature $\x\mapsto \prod_{i=1}^{n}\psi^i_{\omega_i}(\x)$.
\end{lemma}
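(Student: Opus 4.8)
\textbf{Proof proposal for Lemma~\ref{lem:rfs_mul}.}
The plan is simply to unwind the definition of an RFS and exploit the independence of the samples $\omega_1,\ldots,\omega_n$. Set $\Omega = \Omega_1\times\cdots\times\Omega_n$ equipped with the product measure $\mu = \mu^1\times\cdots\times\mu^n$, write a generic element as $\omega = (\omega_1,\ldots,\omega_n)$, and define $\psi(\omega,\x) = \prod_{i=1}^n \psi^i(\omega_i,\x)$. First I would note that $\psi$ is measurable as a finite product of measurable functions, and that $k(\x,\x') = \prod_i k^i(\x,\x')$ is indeed a normalized kernel: the Schur product theorem guarantees that an entrywise product of PSD matrices is PSD, and $k(\x,\x) = \prod_i k^i(\x,\x) = 1$ since each $k^i$ is normalized. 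So the statement to be proved is well posed.

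The main computation is then the identity
\[
\E_{\omega\sim\mu}\!\left[\psi(\omega,\x)\overline{\psi(\omega,\x')}\right]
= \E_{\omega\sim\mu}\!\left[\prod_{i=1}^n \psi^i(\omega_i,\x)\overline{\psi^i(\omega_i,\x')}\right]
= \prod_{i=1}^n \E_{\omega_i\sim\mu^i}\!\left[\psi^i(\omega_i,\x)\overline{\psi^i(\omega_i,\x')}\right]
= \prod_{i=1}^n k^i(\x,\x') = k(\x,\x') \,,
\]
where the middle equality is Fubini/Tonelli applied to the product of independent (hence factoring) integrands, and the third equality uses that each $(\psi^i,\mu^i)$ is an RFS for $k^i$. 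To invoke Fubini legitimately one needs integrability; here it is immediate because, as remarked after~\eqref{eq:ker_eq_inner}, each $\psi^i$ satisfies $|\psi^i(\omega_i,\x)|\ge 1$ and is $C_i$-bounded for some finite $C_i$ (any RFS has a well-defined, though possibly large, norm in the constructions we use), so the integrand is bounded.

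The only real subtlety — and the step I would be most careful about — is the interchange of expectation and product: it is valid precisely because the coordinates $\omega_i$ are sampled independently, so for fixed $\x,\x'$ the random variables $\psi^i(\omega_i,\x)\overline{\psi^i(\omega_i,\x')}$ are independent across $i$ and the expectation of their product is the product of their expectations. There is nothing further to do: the displayed chain of equalities is exactly the defining property of an RFS for $k$, and (via \eqref{eq:ker_eq_inner}) it also yields the real-inner-product form $k(\x,\x') = \E_\omega\inner{\psi(\omega,\x),\psi(\omega,\x')}$. As a closing remark one can record the boundedness bookkeeping that motivates the lemma: if each $\psi^i$ is $C_i$-bounded then $\psi$ is $\big(\prod_i C_i\big)$-bounded, and in particular a product of \nrme\ RFSs is again \nrme, which is the property used in Algorithm~\ref{alg:skel_rfs}.
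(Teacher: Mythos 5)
Your argument is correct and is exactly the "direct consequence" the paper has in mind when it omits the proof: form the product measure, use independence (Tonelli/Fubini) to turn the expectation of the product into the product of expectations, and note that the Schur product theorem makes $\prod_i k^i$ a normalized kernel. The one step I would fix is your integrability justification: the paper does not assert $|\psi^i(\omega_i,\x)|\ge 1$ pointwise (only that the sup norm $\|\psi^i\|$ is at least $1$), and a general RFS need not be bounded at all; the clean justification is that $\E|\psi^i(\omega_i,\x)|^2 = k^i(\x,\x) = 1$ places each $\psi^i(\cdot,\x)$ in $L^2(\Omega_i)$, so by Cauchy--Schwarz each factor $\psi^i(\omega_i,\x)\overline{\psi^i(\omega_i,\x')}$ is in $L^1$ with norm at most $1$, and Tonelli then bounds the integral of the absolute product by $1$, licensing Fubini.
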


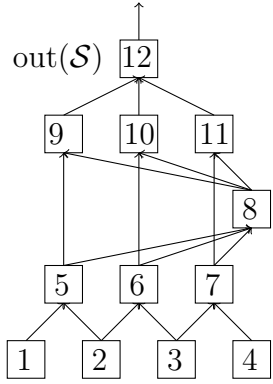
\begin{figure}[t]
\newcounter{nodenum}[nodenum]
\setcounter{nodenum}{1}
\begin{center}
\begin{tikzpicture}
\foreach \i in {1,...,4}
{
	\draw (\i -7,0) rectangle (\i-7+0.5,0.5);
	\node[text width=0.75cm] at (\i-7+0.5,0.25) {$\thenodenum$};
	\addtocounter{nodenum}{1}
}
\foreach \i in {1,...,3}
{
	\draw (\i-7 + 0.5 ,1) rectangle (\i-7+1,1.5);
	\draw [->] (\i-7 +0.25  ,0.5) -- (\i-7 + 0.75 ,1);
	\draw [->] (\i-7 +1.25  ,0.5) -- (\i-7 + 0.75 ,1);
	\draw [->] (\i-7 + 0.75 ,1.5) -- (-2.75 ,2);
	\node[text width=0.75cm] at (\i-7+1.0,1.25) {\thenodenum};
	\addtocounter{nodenum}{1}
}
\draw (-3 ,2) rectangle (-2.5,2.5);
\node[text width=0.25cm] at (-2.75, 2.25) {\thenodenum};
\addtocounter{nodenum}{1}
\foreach \i in {1,...,3}
{
	\draw (\i-7 + 0.5 ,3) rectangle (\i-7+1,3.5);
	\node[text width=0.5cm] at (\i-7+0.8, 3.25) {\thenodenum};
	\addtocounter{nodenum}{1}
	\draw [->] (\i-7 +0.75, 1.5) -- (\i-7 +0.75,3);
	\draw [->] (-2.75, 2.5) -- (\i-7 +0.75,3);
	\draw [->] (\i-7 + 0.75 ,3.5) -- (-4.25 ,4);
}
\draw (-4.5 ,4) rectangle (-4,4.5);
\node[text width=3.25cm] at (-4.3,4.25) {$\out(\cs) \;\; \thenodenum$};
\draw [->] (-4.25 ,4.5) -- (-4.25 ,5);
\end{tikzpicture}
\caption{An illustration of a computation skeleton.\label{skeleton:fig}}
\end{center}
\end{figure}
\paragraph{Random feature schemes from computation skeletons.}
We next describe and analyze the case where the compositional kernel
is defined recursively using a concrete computation graph defined below. Let
$\cx_1,\ldots,\cx_n$ be measurable spaces with corresponding normalized
kernels $k^1,\ldots,k^n$ and RFSs $\psi^1,\ldots,\psi^n$. We refer to these
spaces, kernels, and RFS as {\em base} spaces, kernels and RFSs. We also
denote $\cx = \cx_1\times \ldots\times\cx_n$. The base spaces (and
correspondingly kernels, and RFSs) often adhere to a simple form.  For
example, for real-valued input, feature $i$ is represented as
$\cx_i = \mathbb{T}$,
where $k^i(z,z') = \inner{z,z'}$, $\psi_{\omega}^i(z) = z^\omega$,
and $\omega$ is distributed uniformly in $\{\pm 1\}$.

We next discuss the procedure for generating compositional RFSs using a
structure termed {\em computation skeleton}, or {\em skeleton} for short. A
skeleton $\cs$ is a DAG with $m:=|\cs|$ nodes. $\cs$ has a single node whose
out degree is zero, termed the {\em output} node and denoted $\out(\cs)$, see
Figure~\ref{skeleton:fig} for an illustration. The nodes indexed $1$ through $n$
are input nodes, each of which is associated with a base space. We refer to
non-input nodes as internal nodes. Thus, the indices of internal nodes are in
$\{n+1,\dots,m\}$. An internal node $v$ is associated with a PSD function
(called a conjugate activation function~\cite{DanielyFrSi16}[Sec.~5]),
$\hat{\sigma}_v(\rho) = \sum_{i=0}^\infty a^v_i\rho^i$. For every node $v$
we denote by $\cs_v$ the subgraph of $\cs$ rooted at $v$. This sub-graph
defines a compositional kernel through all the nodes nodes with a directed
path to $v$. By definition it holds that $\out(\cs_v) = v$ and
$\cs_{\out(\cs)}=\cs$. We denote by $\In(v)$ the set of nodes with a directed
edge into $v$.  Each skeleton defines a kernel
$k_\cs:\cx\times\cx\to \reals$ according to the following recurrence,
\begin{equation*}
k_{\cs}(\x,\x') =\begin{cases}
k_v(\x,\x') & v \in [n] \\
\hat{\sigma}_{v}\!\left(\frac{\sum_{u\in \In\left(v\right)}k_{\cs(u)}(\x,\x')}
	{\left|\In\left(v\right)\right|}\right) &
	v \not\in [n]
\end{cases} \hspace{0.25cm} \mbox{ for } v = \out(\cs) \;.
\end{equation*}
In Figure~\ref{alg:skel_rfs} we give the pseudocode describing the RFS for the
kernel $k_\cs$. We call the routine $\rfss$ as a shorthand for a Random Feature
Scheme for a Skeleton. The correctness of the algorithm is a direct
consequence of Lemmas~\ref{lem:rfs_avg}~and~\ref{lem:rfs_mul}.
\begin{algorithm}[H]
\begin{minipage}{0.6\textwidth}
	\centering
  \caption{$\rfss\!\left(\cs\right)$}
  \begin{algorithmic}\label{alg:skel_rfs}
    \STATE Let $v=\out(\cs)$
    \IF {$v\in[n]$}
			\STATE Return $\x\mapsto\psi^v(\x)$
    \ELSE
    \STATE Sample $l\in \{0,1,2,\ldots,\}$ according to
		$\displaystyle (a^v_i)^\infty_{i=0}$
    \FOR {$j=1,\ldots,l$}
    \STATE Choose $u\in\mathrm{in}(v)$ at random
    \STATE Call
			$\rfss(\cs_u)$ and get
			$ \x\mapsto\psi_{\omega_j}(\x)$
    \ENDFOR
    \STATE Return $\x\mapsto\prod_{j=1}^l \psi_{\omega_j}(\x)$
    \ENDIF
  \end{algorithmic}
\end{minipage}
\end{algorithm}

We next present a simple running time analysis of Algorithm~\ref{alg:skel_rfs}
and the sparsity of the generated random features. Note that a compositional
random feature is a multiplication of base random features. Thus the amortized
time it takes to generate a compositional random feature and its sparsity
amount to the expected number of recursive calls made by $\rfss$. For a given
node $v$ the expected number of recursive calls emanating from $v$ is,
$$
\E_{l\sim (a^v_j)}\!\!\left[\,l\,\right] =
	\sum_{j=0}^\infty j \, a_j = \hat{\sigma}'(1) \,.
$$
We now define the {\em complexity} of a skeleton as,
\begin{equation}\label{eq:skel_comp}
\cc(\cs) =\begin{cases}
1 & \out(\cs)\in[n] \\
\hat{\sigma}'_{v}(1)
	\frac
		{\sum_{u\in \In\left(v\right)}\cc\left(\cs(u)\right)}
		{\left|\In\left(v\right)\right|} & \mbox{otherwise}
\end{cases} \,.
\end{equation}
It is immediate to verify that $\cc(\cs)$ is the expected value of the number
of recursive calls and the sparsity of a random feature. When all conjugate
activations are the same \eqref{eq:skel_comp} implies that \[ \cc(\cs) \le
\left(\hat{\sigma}'(1)\right)^{\depth(\cs)} \,,\] where equality holds when
the skeleton is layered. For activations such as ReLU, $\sigma(x) =
\max(0,x)$, and exponential, $\sigma(x) = e^x$, we have $\hat{\sigma}'(1)=1$, and thus
$\cc(\cs)=1$. Hence, it takes constant time in expectation to generate a
random feature, which in turn has a constant number of multiplications of base
random features.  For example, let us assume that the basic spaces are
$\sphere^{1}$ with the standard inner product, and that the skeleton has a
single non-input node, equipped with the exponential dual activation
$\hat{\sigma}(\rho)=e^\rho$. In this case, the resulting kernel is the
Gaussian kernel and $\cc(\cs)=1$. Therefore, the computational cost of
storing and evaluating each feature is constant. This is in contrast to
the Rahimi and Recht scheme \cite{RahimiRe07}, in which the cost is linear in $n$.

\section{Empirical Evaluation} \label{sec:concentration}

\paragraph{Accuracy of kernel approximation.}
\newcommand\cat[1]{\textbf{#1}}

We empirically evaluated the kernel approximation of our random feature
scheme under kernels of varying structure and depth. Our concern is
efficiency of random features: how does the quality of approximation
fare in response to increasing the target dimension of the feature
map? Theorem~\ref{thm:ker_apx} already establishes an upper bound for
the approximation error (in high probability), but overlooks a few
practical advantages of our construction that are illustrated in the
experiments that follow. Primarily, when one repeatedly executes
Algorithm~\ref{alg:skel_rfs} in order to generate features, duplicates may
arise. It is straightforward to merge them and hence afford to
generate more under the target feature budget. We use the CIFAR-10
dataset for visual object recognition
\citep{KrizhevskySuHi09}.

We considered two kernel structures, one \emph{shallow} and another
\emph{deep}. Figure~\ref{fig:conc-skels} caricatures both. For visual
clarity, it oversimplifies convolutions and the original image to
one-dimensional objects, considers only five input pixels, and
disregards the true size and stride of convolutions used.

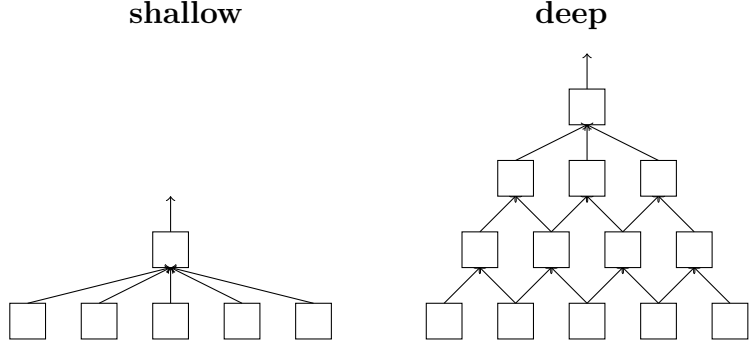
\begin{figure}[t]
  \begin{center}
  \begin{tabular}{cc}
  \textbf{shallow} & \textbf{deep} \\[.6em]
  \begin{tikzpicture}[scale=.95]
    \foreach \i in {1,...,5} {
      \draw (\i,0) rectangle (\i+.5,.5);
      \draw [->] (\i+.25,.5) -- (3.25,1);
    }
    \draw (3.0,1) rectangle (3.5,1.5);
    \draw [->] (3.25,1.5) -- (3.25,2);
  \end{tikzpicture}
  ~~ & ~~
  \begin{tikzpicture}[scale=.95]
    \foreach \i in {1,...,5} {
      \draw (\i,0) rectangle (\i+.5,.5);
    }
    \foreach \i in {1,...,4} {
      \draw [->] (\i+.25,.5) -- (\i+.75,1);
      \draw [->] (\i+1.25,.5) -- (\i+.75,1);
    }
    \foreach \i in {1,...,4} {
      \draw (\i+.5,1) rectangle (\i+1,1.5);
    }
    \foreach \i in {1,...,3} {
      \draw [->] (\i+.5+.25,1.5) -- (\i+1+.25,2.0);
      \draw [->] (\i+.5+1.25,1.5) -- (\i+.5+.75,2.0);
    }
    \foreach \i in {1,...,3} {
      \draw (\i+1,2) rectangle (\i+1+.5,2.5);
      \draw [->] (\i+1+.25,2.5) -- (3.25,3);
    }
    \draw (3.0,3) rectangle (3.5,3.5);
    \draw [->] (3.25,3.5) -- (3.25,4);
  \end{tikzpicture}
  \end{tabular}
  \caption{Simple illustration of the skeleton structures
    corresponding to the shallow (left) and deep (right) settings in
    evaluating the approximation (Figure~\ref{fig:concentration}). The
    deep setting considers two layers of convolutions followed by one
    that is fully connected.}
  \label{fig:conc-skels}
  \end{center}
\end{figure}

Following \citet{DanielyFrSi16}, for a scalar function $\sigma : \mathbb{R}
\to \mathbb R$ termed an activation, let $\hat\sigma(\rho) = \mathbb E_{(X,Y)
\sim N_\rho} [\sigma(X)\sigma(Y)]$ be its conjugate activation (shown in the
original to be a PSD function). Our shallow structure is simply a Gaussian
(RBF) kernel with scale $0.5$. Equivalently, again borrowing the terminology
of \citet{DanielyFrSi16}, it corresponds to a single-layer fully-connected
skeleton, having a single internal node $v$ to which all input nodes point.
The node $v$ is labeled with a conjugate activation $\hat\sigma_v(\rho) =
\exp((\rho-1)/4)$.

The deep structure comprises a layer of 5x5 convolutions at stride 2
with a conjugate activation of $\hat\sigma_1(\rho) =
\exp((\rho-1)/4)$, followed by a layer of 4x4 convolutions at stride 2
with the conjugate activation $\hat\sigma_2$ corresponding to the ReLU
activation $\sigma_2(t) = \max\{0,t\}$, followed by a fully-connected
layer again with the ReLU's conjugate activation.

In each setting, we compare to a natural baseline built (in part, where
possible) on the scheme of \citet{RahimiRe07} for Gaussian kernels. In the
shallow setting, doing so is straightforward, as their scheme applies
directly. As their scheme is not defined for compositional kernels, our
baseline in the deep setting is a hybrid construction. A single random
features is generated according to the recursive procedure of
Algorithm~\ref{alg:skel_rfs}, until an internal node is reached \emph{in the
bottom-most convolutional layer}. Each such node corresponds to a Gaussian
kernel, and so we apply the scheme \citet{RahimiRe07} to approximate the
kernel of that node.

For each configuration of true compositional kernel and feature
budget, we repeat the following ten times: draw a batch of 128 data
points at random (each center-cropped to 24x24 image pixels), generate
a randomized feature map, and compute $128^2$ kernel evaluations. The
result is $10 \cdot 128^2$ corresponding evaluations of the true
kernel $k$ and an empirical kernel $\hat k$. We compare error measures
and correlation between the two kernels using (i) RFS inner products
as the empirical kernel and (ii) inner products from the baseline
feature map. Figure~\ref{fig:concentration} plots the comparison.

\begin{figure*}[t]
  \begin{center}
  \includegraphics[scale=.525]{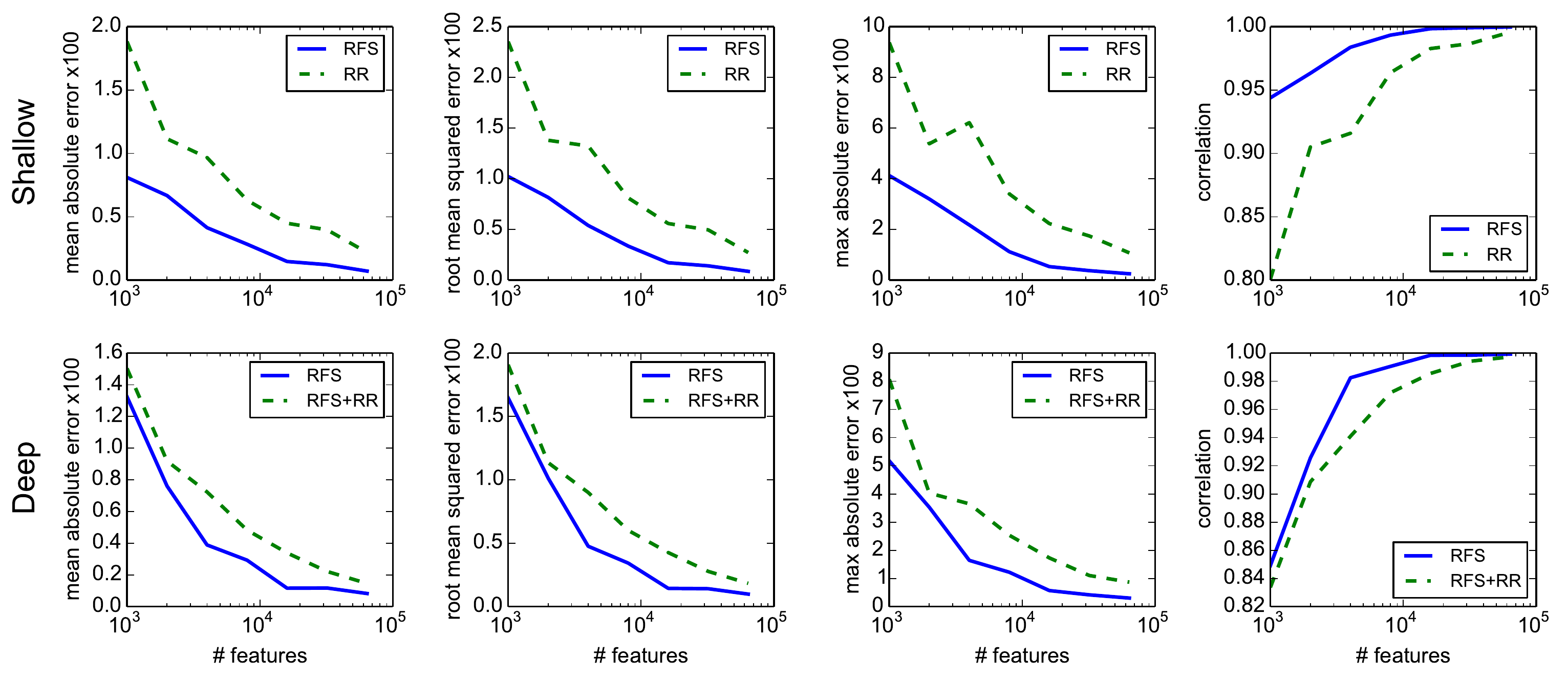}
  \caption{Comparisons of the mean-absolute, root-mean-squared, and
    maximum differences, as well as the correlation, between an
    empirical kernel and the ground truth kernel. RFS denotes our
    construction. In the shallow setting, the baseline (RR) is the
    scheme of \citet{RahimiRe07}
    for Gaussian kernels. In the deep
    setting, the baseline (RFS+RR) uses the recursive sampling scheme
    of RFS to arrive at a first-level convolution, and then mimics RR
    on the convolution patch.  The RFS approximation is dominant in
    every comparison.}
  \label{fig:concentration}
  \end{center}
\end{figure*}

\paragraph{Structural effects and relation to neural networks.}
In connection to deep neural networks, we experiment with the effect of
compositional kernel architecture on learning.  We explore two questions: (i)
Is a convolutional structure effective in a classifier built on random
features?  (ii) What is the relation between learning a compositional kernel
and a neural network corresponding to its skeleton?  The second question is
motivated by the connection that \citet{DanielyFrSi16} establish between a
compositional kernel and neural networks described by its skeleton. In
particular, we first explore the difference in classification accuracy between
the kernel and the network. Then, since training under the kernel is simply a
convex problem, we ask whether its relative accuracy across architectures
predicts well the relative performance of analogous fully-trained networks.

\begin{table}[t!]
\begin{center}
	\begin{tabular}{
		|c
		|p{0.1\textwidth}
		|p{0.1\textwidth}
		|p{0.1\textwidth}
		|p{0.1\textwidth}|}
		\hline
		Arch & RFS & Net & Gap & Rank
		\\ \hline
		{\color{blue}4},{\color{red}5} & 72.88 & 77.09 & 4.21 & 1
		\\ \hline
		{\color{blue}6},{\color{red}4} & 72.42 & 77.06 & 4.64 & 2
		\\ \hline
		{\color{blue}6},{\color{red}6} & 72.25 & 75.52 & 3.27 & 4 (-1)
		\\ \hline
		{\color{blue}4},{\color{red}4} & 71.49 & 74.94 & 3.45 & 6 (-2)
		\\ \hline
		{\color{blue}5},{\color{red}4} & 71.45 & 76.78 & 5.33 & 3 (+2)
		\\ \hline
		{\color{blue}4},{\color{red}6} & 70.62 & 74.24 & 3.62 & 7 (-1)
		\\ \hline
		{\color{blue}5},{\color{red}6} & 70.39 & 75.38 & 4.99 & 5 (+2)
		\\ \hline
		{\color{blue}6},{\color{red}5} & 70.14 & 74.23 & 4.09 & 8
		\\ \hline
		{\color{blue}5},{\color{red}5} & 69.64 & 73.57 & 3.93 & 9
		\\ \hline
		Full & 60.01 & 55.09 & -4.92 & 10
		\\ \hline
		\end{tabular}
		\caption{Comparison of test accuracy of kernel learning with RFS and
		neural networks. For each architecture test we provide its ranking in
		terms of performance and relative rank ordering.
		The column designated as ``Arch'' describes the convolution
		size as pairs {\color{blue}$a$},{\color{red}$b$} where the first layer
		convolution is of size {\color{blue}$a$}x{\color{blue}$a$} and the second
		is {\color{red}$b$}x{\color{red}$b$}.}
		\label{tab:arch-and-nn}
		\end{center}
\end{table}

For the experiment, we considered several structures and trained both a
corresponding networks and the compositional kernel through an RFS
approximation. We again use the CIFAR-10 dataset, with a standard data
augmentation pipeline \citep{krizhevsky2012imagenet}: random 24x24 pixel crop,
random horizontal flip, random brightness, saturation, and contrast delta,
per-image whitening, and per-patch PCA.  In the test set, no data augmentation
is applied, and images are center-cropped to 24x24.  We generated $10^6$
random features, and trained for 120 epochs with
AdaGrad~\citep{duchi2011adaptive} and a manually tuned initial learning rate
among $\{25,50,100,200\}$.

The convolutional architectures are of the same kind as the deep variety in
Sec.~\ref{sec:concentration}, i.e.\ two convolutions with a size between
4x4 and 6x6 and a stride of 2, followed by a fully-connected layer.  The
fully-connected architecture is the shallow structure described in
Section~\ref{sec:concentration}.  All activations are ReLU.  The per-patch PCA
preprocessing step projects patches to the top $q$ principal components where
$q$ is 10, 12, and 16, respectively, for first-layer convolutions of size 4,
5, and 6, respectively.  The neural networks are generated from the kernel's
skeleton by node replication (i.e.\ producing channels) at a rate of 64
neurons per skeleton node for convolutions and 384 for fully-connected layers.
We use the typical random Gaussian initialization
\citep{glorot2010understanding}, 200 epochs of AdaGrad with a manually tuned
initial learning rate among $\{.0002,.0005,.001,.002,.005\}$.

\begin{figure}[t]
  \begin{center}
  \includegraphics[scale=.55]{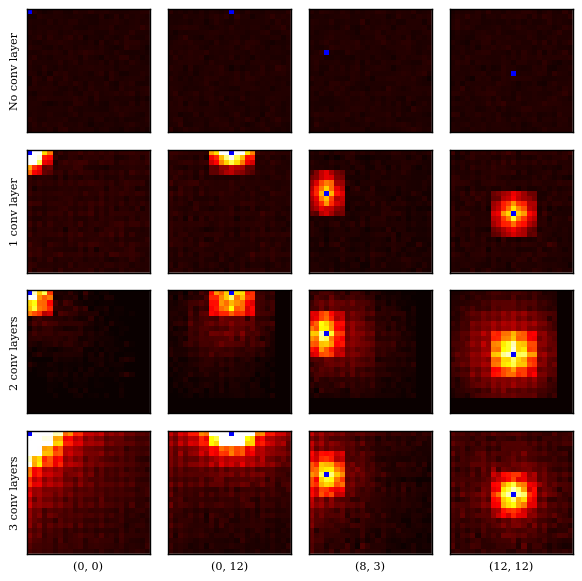}
  \caption{\small Illustration of the local structure captured by the our
    random feature scheme.  We generated 500,000 random features
    corresponding to four kernels - a flat kernel with one fully
    connected hidden layer, and three deep kernels with one, two and
    three convolution layers with ReLU activations.  The figures above
    show the correlations of 4 pixels (in blue) with all other
    pixels; lighter colors denote higher correlation.}
  \label{fig:correlation}
  \end{center}
\end{figure}

Test set accuracies are given in Table~\ref{tab:arch-and-nn}, annotated with
how well approximate RFS learning accuracies rank those of the trained
networks. We would like to underscore that, for convolutional kernels,
networks consistently outperformed the kernel.  Meanwhile, the fully-connected
kernel actually outperforms the corresponding network.  RFS learning moreover
ranks the top two networks correctly (as well as the bottom three).  This
observation is qualitatively in line with earlier findings of
\citet{saxe2011random}, who final-layer training of a randomly initialized
network to rank fully-trained networks. Indeed, from~\citet{DanielyFrSi16}, we
know that the random network approach is an alternative random feature map for
the compositional kernel.


\paragraph{Visualization of hierarchical random features.}

In Figure \ref{fig:correlation}, we illustrate how our random feature scheme
is able to accommodate local structures that are fundamental to image data.
We chose 4 different networks of varying depths, and generated 500,000 random
features for each.  We then computed for each pixel, the probability that it
co-occurs in a random feature with any other pixel, by measuring the
correlation between their occurrences.  As expected, for the flat kernel, the
correlation between any pair of pixels was the same.  However for deeper
kernels, nearby pixels co-occur more often in the random features, and the
degree of this co-occurrence is shown in the figure. As we increase the depth,
different tiers of locality appear. The most intense correlations share a
common first-layer convolution, while moderate correlation share only a
second-layer convolution. Lastly, mild correlation share no convolutions.


\bibliography{bib}
\end{document}